%% file: iclr2027_conference.tex
\documentclass{article} 
\usepackage{iclr2027_conference,times}

\usepackage[table]{xcolor}
\usepackage{url}
\usepackage{amsmath,amssymb,amsthm}
\usepackage{booktabs,multirow,graphicx,wrapfig}
\usepackage{subcaption}
\usepackage{algorithm}
\usepackage{algpseudocode}
\usepackage{hyperref}

\definecolor{mainblue}{HTML}{248FD6}      
\colorlet{highlight}{mainblue!10}         
\colorlet{algorange}{highlight}           

\hypersetup{
  colorlinks=true,
  linkcolor=mainblue,
  citecolor=mainblue,
  urlcolor=mainblue
}

\theoremstyle{plain}
\newtheorem{theorem}{Theorem}
\newtheorem{lemma}[theorem]{Lemma}

\renewcommand{\headrulewidth}{0pt}

\title{Preserving Updates in Low-Bit Recurrent State Quantization}

\author{%
{Haotian Li\thanks{Equal contribution.}},\hspace{0.5em}
{Liran Dong\footnotemark[1]},\hspace{0.5em}
Kaicheng Xiao,\hspace{0.5em}
Guoliang Xing\\
The Chinese University of Hong Kong\\
\texttt{\{lh023, dl123, xk023, glxing\}@ie.cuhk.edu.hk}
}

\iclrfinalcopy
\begin{document}

\maketitle
\lhead{}

\begin{abstract}
Linear attention and recurrent models compress past tokens into a fixed-size
state that is updated at every step.
However, storing these states in high precision incurs substantial memory and
bandwidth costs at large batch sizes, motivating low-bit state quantization.
Unlike weights and key--value caches, a recurrent state is requantized after
every update, so its quantization errors accumulate across steps.
Our analysis shows that these errors are highly coherent across steps, so
reducing per-step errors alone is insufficient.
This coherence arises mainly from small updates that are lost in rounding.
To preserve these updates, we intervene both before and at rounding.
Before rounding, we propose Low-Rank Bypass (LRB), which accumulates small
updates in a compact low-rank buffer and periodically merges its sufficiently
large elements into the state.
At rounding, we observe that a state element can even move opposite to its
update, and such reversals prove particularly harmful.
Therefore, we propose Bounded Directional Rounding (BDR), which rounds reversed
elements toward their updates without additional storage.
Although BDR enlarges the rounding error, it improves the model, suggesting that
the direction of an update matters more than its rounding error at a single
step.
Across Gated DeltaNet, GLA, Mamba2, and RetNet, our training-free method
substantially outperforms INT4, while reducing state memory by up to
5.3$\times$ and speeding up decoding by up to 1.9$\times$ relative to FP32
states.
\end{abstract}

\input{sec/1_intro}

\input{sec/2_bkg}

\input{sec/3_lrb}

\input{sec/5_eval}

\input{sec/6_related}

\input{sec/7_discussion}

\bibliography{iclr2027_conference}
\bibliographystyle{iclr2027_conference}

\appendix

\input{sec/appendix_proof}

\end{document}

%% file: sec/1_intro.tex
\section{Introduction}

Language modeling requires storing contextual information and retrieving relevant associations as new tokens arrive.
Transformers provide this capability through softmax attention, but they keep the keys and values of all previous tokens, so their memory and computation grow with the context length~\citep{vaswani2017attention}.
To avoid this growth, linear attention and related recurrent models compress past information into a fixed-size state, so that computation grows linearly and memory stays constant with the context length~\citep{katharopoulos2020transformers,schlag2021linear,dao2024transformers}.
Owing to this efficiency, these architectures have been adopted in recent large language models~\citep{chandiramani2026nemotron}.

However, since the fixed-size state accumulates the updates of all previous tokens, it is commonly stored in FP32 to preserve their precision~\citep{chandiramani2026nemotron}. With such a state maintained for each sequence in every recurrent layer, the total state memory grows linearly with the batch size and limits the number of concurrent sequences~\citep{tianqi-etal-2025-q}. For example, at a batch size of 128, the recurrent states of a 1.3B Mamba2 model~\citep{dao2024transformers} occupy about 12~GiB. Moreover, every decoding step reads and writes the entire state, so the state accounts for a large share of memory traffic and limits decoding throughput at large batch sizes.

A natural way to reduce these costs is to store the states in lower
precision~\citep{tianqi-etal-2025-q,chandiramani2026nemotron}.
Quantization has been widely studied for weights, activations, and key--value
caches.
Existing methods suppress outliers through scaling, rotation, or per-channel
quantization~\citep{xiao2023smoothquant,lin2024awq,shao2024omniquant,ashkboos2024quarot,liu2024kivi,hooper2024kvquant},
or minimize the layer-wise reconstruction error with second-order
information~\citep{frantar2022gptq}.
Since these tensors are quantized only once, these methods aim to reduce the
error of a single quantization.
A recurrent state, however, is requantized after every update, so its accuracy
depends on how the errors of all steps accumulate.

We analyze this accumulation and find that the errors of different steps are
highly coherent (Sec.~\ref{sec:error}).
Instead of cancelling out like random noise, they add up across steps.
Therefore, reducing the error of each quantization alone is insufficient.
For example, a Hadamard rotation suppresses outliers to reduce the per-step
error, but the perplexity at long contexts remains far above that of FP32
states (Sec.~\ref{sec:exp}).
To find the source of this coherence, we examine how rounding treats a single
update.
The quantization grid is fitted to a state that accumulates many past updates,
so its spacing is much larger than a single update.
As a result, rounding often leaves the state unchanged, and the update is lost.
We further find that the per-step errors consist largely of these lost updates,
which makes them coherent across steps.

To keep small updates from being lost, we intervene both before and at
rounding.
Before rounding, we propose Low-Rank Bypass (LRB), which accumulates small
updates in a bypass outside the quantized state and periodically merges its
sufficiently large elements into the state.
LRB exploits the outer-product structure of each update to keep the bypass in
low-rank form, which adds little extra storage.
At rounding, we observe that a state element can even move opposite to its
update, and such reversals turn out to be particularly harmful to the model.
We therefore propose Bounded Directional Rounding (BDR), which rounds a reversed
element toward its update without additional storage.
Counterintuitively, BDR increases the rounding error but still improves the
model, suggesting that the direction of an update matters more than its rounding
error at a single step.
Our contributions are summarized as follows.
\begin{itemize}
    \vspace{-0.1cm}
    \item An analysis of error accumulation in low-bit recurrent states.
    We show that the per-step errors add up coherently across steps, so reducing
    the error of each quantization is insufficient, and we trace this coherence
    to small updates lost in rounding.
    \item A training-free state quantization method, in which LRB preserves the
    magnitude of small updates before rounding and BDR preserves their
    direction at rounding.
    We demonstrate the method on Gated DeltaNet~\citep{yang2025gated},
    GLA~\citep{yang2024gated}, Mamba2~\citep{dao2024transformers}, and
    RetNet~\citep{sun2023retentive}.
    It substantially improves model quality over INT4, while reducing state
    memory by 4.7--5.3$\times$ and speeding up decoding by up to 1.9$\times$
    over FP32 states.
\end{itemize}

%% file: sec/2_bkg.tex
\vspace{-0.2cm}
\section{Error Accumulation in Quantized Recurrent States}
\label{sec:error}

\subsection{Quantized Recurrent States}
\label{sec:2-1}

\begin{wraptable}{r}{0.57\textwidth}
\vspace{-\intextsep}
\centering
\small
\setlength{\tabcolsep}{6pt}
\renewcommand{\arraystretch}{1.2}
\begin{tabular}{lll}
\toprule
Model & Decay $\mathbf A_t$ & Update $\mathbf B_t$ \\
\midrule
RetNet & $\gamma\mathbf I$ & $\mathbf k_t\mathbf v_t^\top$ \\
Mamba2 & $\alpha_t\mathbf I$ & $\mathbf k_t\mathbf v_t^\top$ \\
GLA & $\operatorname{Diag}(\boldsymbol\alpha_t)$ & $\mathbf k_t\mathbf v_t^\top$ \\
GDN & $\alpha_t\mathbf I$ & $\beta_t\mathbf k_t\bigl(\mathbf v_t-(\mathbf A_t\mathbf S_{t-1})^\top\mathbf k_t\bigr)^\top$ \\
\bottomrule
\end{tabular}
\vspace{-0.1cm}
\caption{Decay and update terms of the state recurrence.}
\vspace{-0.2cm}
\label{tab:la_models}
\end{wraptable}

Linear attention and recurrent models compress past key--value associations
into a fixed-size state $\mathbf S_t\in\mathbb R^{d_k\times d_v}$, keeping state
storage independent of context
length~\citep{katharopoulos2020transformers,schlag2021linear}.
At each step, a key $\mathbf k_t\in\mathbb R^{d_k}$ and a value
$\mathbf v_t\in\mathbb R^{d_v}$ update the state as
$\mathbf S_t=\mathbf A_t\mathbf S_{t-1}+\mathbf B_t$, and a query
$\mathbf q_t\in\mathbb R^{d_k}$ reads it as $\mathbf o_t=\mathbf S_t^\top\mathbf q_t$.
Here, $\mathbf A_t$ controls the decay of historical information, and
$\mathbf B_t$ is the update written by the current token.
Table~\ref{tab:la_models} lists both terms for the models we study.
Specifically, the decay is a scalar multiple of the identity in
RetNet~\citep{sun2023retentive}, Mamba2~\citep{dao2024transformers}, and
Gated DeltaNet (GDN)~\citep{yang2025gated}, and a diagonal matrix in
GLA~\citep{yang2024gated}.
Moreover, every update in Table~\ref{tab:la_models} is an outer product of
the key and a written value.

Executing this recurrence requires reading and writing the entire state at
every decoding step.
Consequently, both storage and memory traffic grow with the batch size, which
motivates low-precision state storage~\citep{tianqi-etal-2025-q}.
A common remedy is to compute updates in full precision while storing the
updated state in low precision~\citep{chandiramani2026nemotron}.
We follow this setting and use group-wise asymmetric
quantization~\citep{nagel2021white}, which rounds each element to the nearest
point of a uniform grid.
The grid spacing $\Delta_t$ is determined by the range of each group and is
refitted at every step (details in App.~\ref{app:quant}).
The quantized state $\widehat{\mathbf S}_t$ then evolves as
\begin{equation}
\begin{aligned}
\widehat{\mathbf S}_t
&=\mathcal Q_t\bigl(\mathbf A_t\widehat{\mathbf S}_{t-1}+\mathbf B_t\bigr)
=\mathbf A_t\widehat{\mathbf S}_{t-1}+\widetilde{\mathbf B}_t,\\
\mathbf R_t
&=\mathcal Q_t\bigl(\mathbf A_t\widehat{\mathbf S}_{t-1}+\mathbf B_t\bigr)
-\bigl(\mathbf A_t\widehat{\mathbf S}_{t-1}+\mathbf B_t\bigr)
=\widetilde{\mathbf B}_t-\mathbf B_t,
\end{aligned}
\label{eq:quant}
\end{equation}
where $\widetilde{\mathbf B}_t$ is the effective update written into the
quantized state, and $\mathbf R_t$ is the per-step error introduced by rounding.
As a result, quantization replaces each intended update $\mathbf B_t$ with
$\widetilde{\mathbf B}_t$.
Moreover, the per-step error $\mathbf R_t$ is carried into all subsequent states
and thus affects later outputs.

\begin{figure}
    \centering
    \includegraphics[width=1\linewidth]{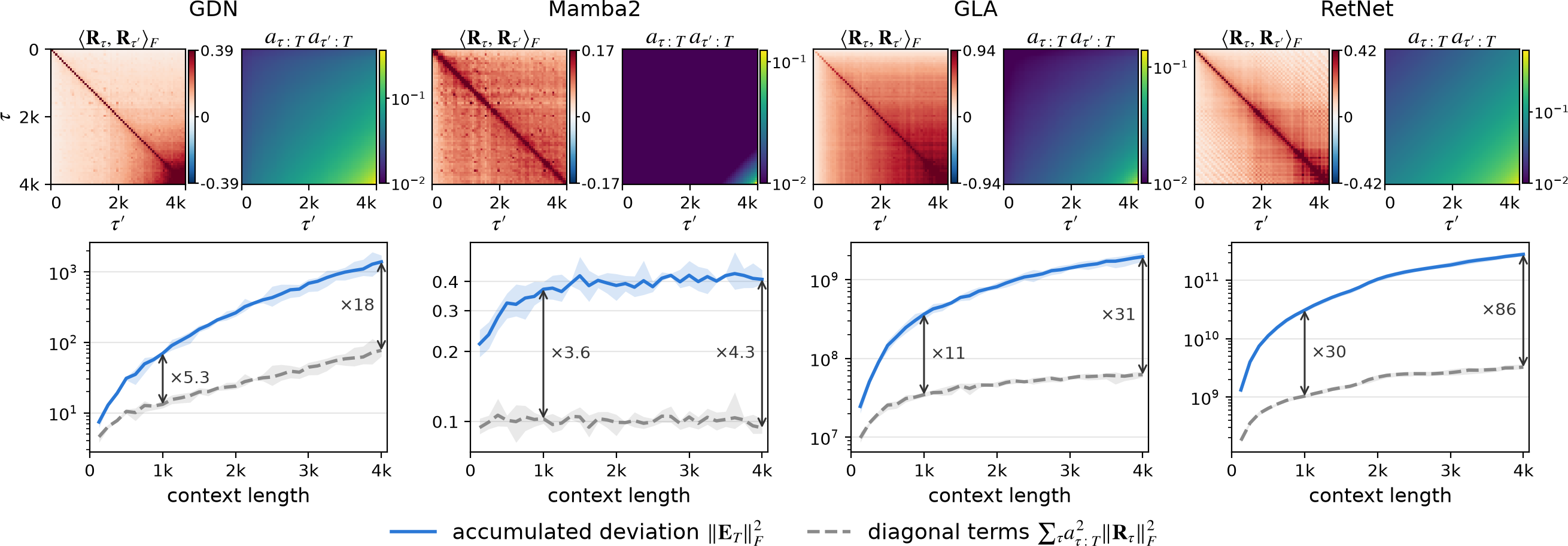}
    \vspace{-0.5cm}
\caption{Accumulation of per-step errors under 4-bit state quantization. Top: inner products $\langle \mathbf{R}_\tau, \mathbf{R}_{\tau'} \rangle_F$ (left), summed over 64-step blocks and normalized by the mean diagonal entry for each layer and head before averaging, and decay weights $a_{\tau:T}\, a_{\tau':T}$ at $T = 4$K (right). Bottom: $\|\mathbf{E}_T\|_F^2$ and its diagonal terms as the context grows, with their ratio $g_T$ marked by arrows.}
    \label{fig:coherence}
    \vspace{-0.3cm}
\end{figure}

\subsection{Coherent Accumulation of Per-Step Errors}
\label{sec:2-2}
\vspace{-0.1cm}

Quantization methods for weights, activations, and key--value caches reduce
the error of each quantization
independently~\citep{frantar2022gptq,lin2024awq,xiao2023smoothquant,ashkboos2024quarot,liu2024kivi,hooper2024kvquant}.
This suffices because each of these tensors is quantized only once.
In contrast, the recurrent state is updated and quantized again at every step,
so its per-step errors $\mathbf R_t$ accumulate.
As a result, the degradation grows with context length.
For example, under 4-bit quantization, the WikiText-103 \citep{merity2016pointer} perplexity of RetNet
rises from 59.2 at 1K tokens to 1193.2 at 8K, whereas that of FP32 states stays
below 26 (Table~\ref{tab:wiki_ppl}).
To understand this growth, we unroll both recurrences with the same updates $\mathbf{B}_\tau$. With $\mathbf{A}_{\tau:T} = \prod_{j=\tau+1}^{T} \mathbf{A}_j$ denoting the decay from step $\tau$ to step $T$, the accumulated deviation of the quantized state from the full-precision state is
\begin{equation}
\mathbf E_T=\widehat{\mathbf S}_T-\mathbf S_T
=\sum_{\tau\le T}\mathbf A_{\tau:T}\widetilde{\mathbf B}_\tau
-\sum_{\tau\le T}\mathbf A_{\tau:T}\mathbf B_\tau
=\sum_{\tau\le T}\mathbf A_{\tau:T}\mathbf R_\tau .
\label{eq:acc_dev}
\end{equation}

For RetNet, Mamba2, and Gated DeltaNet, whose decay is scalar with
$\mathbf A_{\tau:T}=a_{\tau:T}\mathbf I$, the squared Frobenius norm of this
deviation expands as
\begin{equation}
\|\mathbf E_T\|_F^2
=\sum_{\tau\le T}a_{\tau:T}^2\,\|\mathbf R_\tau\|_F^2
+\sum_{\tau\neq\tau'}a_{\tau:T}\,a_{\tau':T}\,
\langle\mathbf R_\tau,\mathbf R_{\tau'}\rangle_F .
\label{eq:dev_split}
\end{equation}
For GLA, whose decay differs across rows, the same expansion holds for each
row of the state.
The first sum contains the diagonal terms with $\tau=\tau'$, which depend
only on the norm of each per-step error.
Conventional quantization methods aim to minimize the error of each
quantization and therefore account only for these terms.
In contrast, the second sum contains the off-diagonal terms between
different steps.
Each of them is the inner product of two per-step errors, weighted by the
decay weight $a_{\tau:T}\,a_{\tau':T}$ that measures how much of the two
errors the state retains at step $T$.
The inner product is positive when the two errors point in similar
directions.

In Eq.~\ref{eq:dev_split}, each term of $\|\mathbf E_T\|_F^2$ is the product
of two factors: the inner product $\langle\mathbf R_\tau,\mathbf R_{\tau'}\rangle_F$
of the per-step errors and the decay weight $a_{\tau:T}\,a_{\tau':T}$.
Fig.~\ref{fig:coherence} (top) plots both factors on WikiText-103
sequences.
First, the inner products are positive not only on the diagonal but also off the diagonal for nearly all pairs of steps, even for steps several thousand tokens apart.
Second, the decay weights determine how many past steps the state retains.
They remain large over several thousand tokens for GDN, GLA, and
RetNet, but only over the last few hundred tokens for Mamba2.
The number of diagonal terms grows linearly, whereas the number of off-diagonal terms grows quadratically.
Hence, the stronger the off-diagonal correlation and the slower the decay,
the more the positive off-diagonal terms dominate $\|\mathbf E_T\|_F^2$, even
though each of them is smaller than a diagonal term.

To quantify this effect, the coherent gain
$g_T=\|\mathbf E_T\|_F^2\big/\sum_{\tau\le T}a_{\tau:T}^2\|\mathbf R_\tau\|_F^2$
compares the squared Frobenius norm of the accumulated deviation with that
of its diagonal terms.
If the errors of different steps were uncorrelated, $g_T$ would remain close
to $1$.
Instead, Fig.~\ref{fig:coherence} (bottom) shows that the accumulated
deviation far exceeds its diagonal terms on all four models, and the gap
widens with context to up to $86\times$ at 4K tokens.
Consistent with the decay weights, the gain of Mamba2 grows only slightly
after the first thousand tokens and stays around $4$, whereas that of RetNet
grows the most.
Perplexity under 4-bit quantization follows the same trend: on WikiText-103
at 4K tokens, it increases by $8.6$ for Mamba2 but by $157.1$ for RetNet
(Table~\ref{tab:wiki_ppl}).
Consequently, \textbf{the errors of different steps combine coherently, and
the resulting off-diagonal terms dominate the accumulated deviation},
especially in models that decay slowly.

\subsection{Coherence Arises from Lost Updates}
\label{sec:2-3}
\vspace{-0.1cm}

\begin{wrapfigure}{R}{0.45\textwidth}
\vspace{-\intextsep}
\centering
\includegraphics[width=\linewidth]{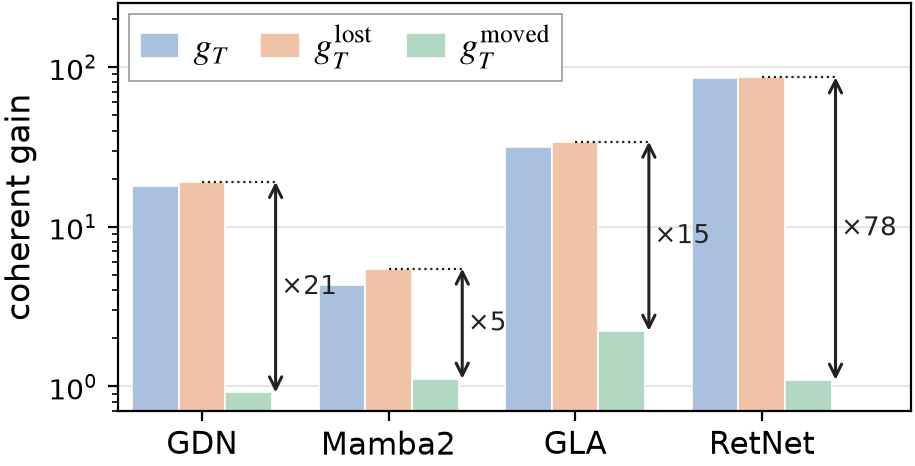}
\vspace{-0.3cm}
\caption{Coherent gains of all per-step errors ($g_T$) and of the errors on
elements whose update is lost ($g^{\text{lost}}_T$) or moved to a different grid
point ($g^{\text{moved}}_T$). Arrows mark $g^{\text{lost}}_T/g^{\text{moved}}_T$.}
\label{fig:coh_comp}
\vspace{-0.35\intextsep}
\end{wrapfigure}

To identify the source of the coherence in Sec.~\ref{sec:2-2}, we examine how
each update is rounded.
The grid spacing $\Delta_t$ is fitted to the range of each group, whose elements
accumulate many past updates.
A single update $\mathbf B_t$ is therefore typically much smaller than
$\Delta_t$ in each element.
Adding such an update often does not change the rounding result, so the
quantized state is the same as if the update had never been written.
In this case, we say that the update is lost in rounding.
We mark the elements with lost updates by the element-wise indicator
$\mathbf M_t=\mathbf 1\bigl[\mathcal Q_t(\mathbf A_t\widehat{\mathbf S}_{t-1}+\mathbf B_t)
=\mathcal Q_t(\mathbf A_t\widehat{\mathbf S}_{t-1})\bigr]$,
where both sides are rounded onto the same grid.
Such losses are common: under 4-bit quantization, the median ratio of a nonzero
update element to its grid spacing is only 0.02--0.21 across the four models, and
68--96\% of these elements are lost at each step (details in App.~\ref{app:lost_stats}).
On these elements, Eq.~\ref{eq:quant} becomes
\begin{equation}
\mathbf R_t
=\bigl(\mathcal Q_t(\mathbf A_t\widehat{\mathbf S}_{t-1})-\mathbf A_t\widehat{\mathbf S}_{t-1}\bigr)
-\mathbf B_t.
\label{eq:lost_error}
\end{equation}
The first term is the rounding error of the decayed state and does not directly
depend on $\mathbf B_t$, whereas the second term is the update itself with the
opposite sign.
The error thus contains the entire update that quantization fails to write.
Consequently, errors on elements with lost updates are expected to inherit the
correlation of the updates across steps.

To test this, we split each per-step error into
$\mathbf R_t^{\mathrm{lost}}=\mathbf M_t\odot\mathbf R_t$ and
$\mathbf R_t^{\mathrm{moved}}=(\mathbf 1-\mathbf M_t)\odot\mathbf R_t$, where the
latter covers the elements whose update changes the rounding result.
We then measure the coherent gain of each part in the same way as $g_T$:
\begin{equation}
g_T^{\mathrm{lost}}
=\frac{\bigl\|\sum_{\tau\le T}a_{\tau:T}\mathbf R_\tau^{\mathrm{lost}}\bigr\|_F^2}
{\sum_{\tau\le T}a_{\tau:T}^2\,\|\mathbf R_\tau^{\mathrm{lost}}\|_F^2},
\qquad
g_T^{\mathrm{moved}}
=\frac{\bigl\|\sum_{\tau\le T}a_{\tau:T}\mathbf R_\tau^{\mathrm{moved}}\bigr\|_F^2}
{\sum_{\tau\le T}a_{\tau:T}^2\,\|\mathbf R_\tau^{\mathrm{moved}}\|_F^2}.
\label{eq:gain_split}
\end{equation}
Since each gain is normalized by its own diagonal terms, the two gains remain
comparable even though the two parts cover different numbers of elements.
Fig.~\ref{fig:coh_comp} shows a sharp contrast between the two parts on
WikiText-103 sequences at $T=4$K.
On the elements with lost updates, $g_T^{\mathrm{lost}}$ is comparable to $g_T$ on all four models.
In contrast, $g_T^{\text{moved}}$ stays close to 1 (at most about 2), so the errors on the remaining elements are only weakly correlated across steps.
Consequently, \textbf{the coherence of the per-step errors arises mainly from the
small updates lost in rounding.}
Reducing the accumulated deviation therefore calls for writing these small
updates into the state, rather than reducing per-step errors alone.

%% file: sec/3_lrb.tex
\section{Preserving Updates under Low-Bit Storage}
Sec.~\ref{sec:2-3} shows that the harmful part of the accumulated deviation
comes from small updates lost in rounding.
These updates are lost because each of them is rounded at a single step onto a
grid much coarser than itself.
We therefore preserve them at two stages of writing an update into the state:
before rounding and at rounding.
Before rounding, we propose Low-Rank Bypass (LRB), which changes \emph{what} is
rounded.
It accumulates small updates in a compact low-rank buffer and periodically
merges its sufficiently large elements into the state, which preserves their
magnitude at a small storage cost (Sec.~\ref{sec:lrb}).
At rounding, we propose Bounded Directional Rounding (BDR), which changes
\emph{how} an update is rounded.
When rounding would move an element against its update, BDR instead rounds it
toward the update, which preserves the direction of the update without
additional storage (Sec.~\ref{sec:bdr}).
Since the two methods act at different stages, each can be applied on its own,
and combining them gives the best results (App.~\ref{app:ablation}).

\begin{figure}
    \centering
    \includegraphics[width=1\linewidth]{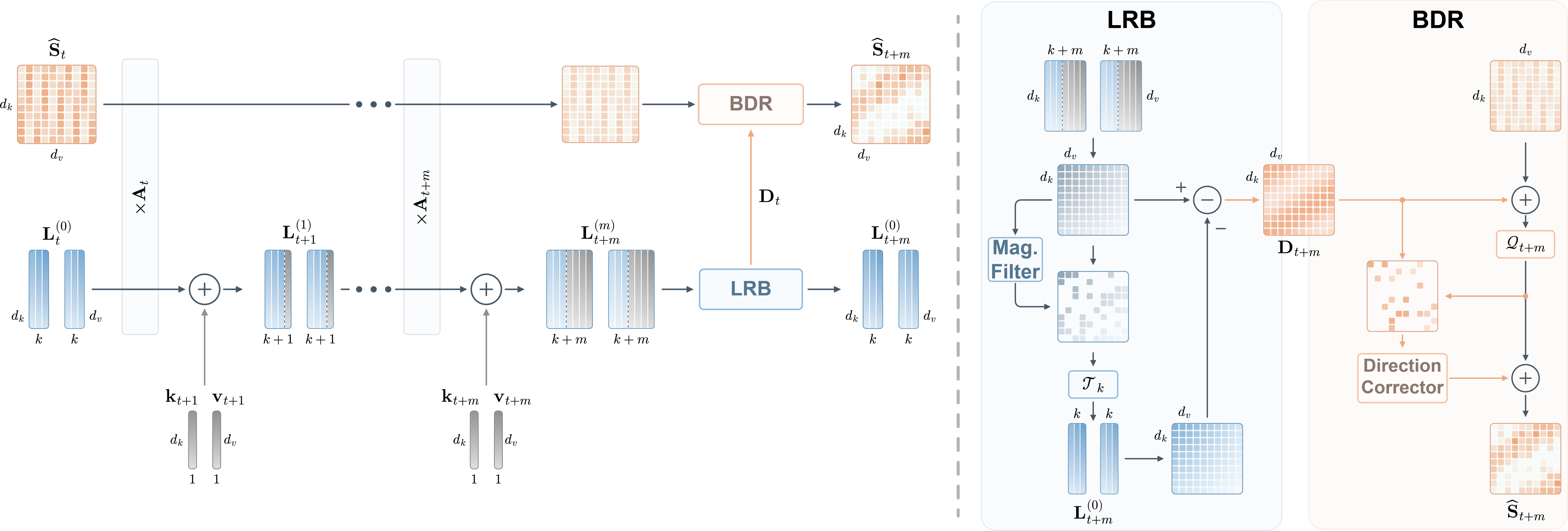}
    \caption{Overview of our method. LRB accumulates updates in a low-rank bypass
and writes them into the quantized state every $m$ steps, and BDR corrects the
reversed elements during rounding.}
    \label{fig:overview}
    \vspace{-0.4cm}
\end{figure}

\subsection{LRB: Preserving the Magnitude of Small Updates}
\label{sec:lrb}

Accumulating updates outside the quantized state requires extra storage, which
would amount to a full-precision copy of the state if kept densely.
To store them compactly, LRB exploits the structure of the updates.
Each update is the outer product $\mathbf k_t\bar{\mathbf v}_t^\top$ of a key and
a written value\footnote{$\bar{\mathbf v}_t=\beta_t\bigl(\mathbf v_t-(\mathbf A_t\mathbf S_{t-1})^\top\mathbf k_t\bigr)$
for GDN and $\bar{\mathbf v}_t=\mathbf v_t$ for the other models
(Table~\ref{tab:la_models}).},
so the sum of several updates has rank at most the number of updates.
LRB therefore keeps them in a low-rank bypass
$\mathbf L_t=\mathbf K_t\bar{\mathbf V}_t^\top$, where $\mathbf K_t$ and
$\bar{\mathbf V}_t$ collect their keys and written values as columns.
Every $m$ steps, LRB merges the bypass into the quantized state.
Between two merges, the quantized state only decays, and each new update is
appended to the bypass:
\begin{equation}
\widehat{\mathbf S}_t=\mathbf A_{t_0:t}\widehat{\mathbf S}_{t_0},\qquad
\mathbf L_t=\mathbf A_t\mathbf L_{t-1}+\mathbf k_t\bar{\mathbf v}_t^\top
=\bigl[\mathbf A_t\mathbf K_{t-1},\,\mathbf k_t\bigr]
 \bigl[\bar{\mathbf V}_{t-1},\,\bar{\mathbf v}_t\bigr]^\top,
\label{eq:lrb}
\end{equation}
where $t_0$ is the most recent merge step.
We store $\widehat{\mathbf S}_{t_0}$ unchanged together with the decay
$\mathbf A_{t_0:t}$ accumulated since then, so the stored state is not rounded
again until the next merge.
The last equality shows that the bypass is updated through its factors, by
decaying the keys and appending one column to each factor.
A query also reads the bypass through its factors:
$\mathbf o_t=(\widehat{\mathbf S}_t+\mathbf L_t)^\top\mathbf q_t
=\widehat{\mathbf S}_t^\top\mathbf q_t+\bar{\mathbf V}_t\bigl(\mathbf K_t^\top\mathbf q_t\bigr)$.
Hence $\mathbf L_t$ is never materialized between merges.

At a merge, some accumulated updates are still small, and merging them loses
them in rounding.
A merge should therefore hold back the elements marked by $\mathbf M_t$
(Sec.~\ref{sec:2-3}), whose updates are lost in rounding, and leave them in the
bypass to accumulate until a later merge.
However, $\mathbf M_t$ is not available before the merge: it relies on the
grid, which is fitted only after the merged elements are chosen.
We therefore approximate $\mathbf M_t$ by the magnitude of each accumulated
update, and keep in the bypass the elements marked by
$\mathbf M^L_t=\mathbf 1\bigl[\,|\mathbf L_t|<\epsilon_L\Delta_t\,\bigr]$,
where $\Delta_t$ is the grid spacing of
$\mathbf A_{t_0:t}\widehat{\mathbf S}_{t_0}+\mathbf L_t$.
Since the kept elements no longer form a few outer products, we retain only
their rank-$k$ approximation, denoted by $\mathcal T_k$, to keep the bypass in
factored form, and merge the residual into the state.
A merge at step $t$ thus computes
\begin{equation}
\mathbf L_t'=\mathcal T_k\bigl(\mathbf L_t\odot\mathbf M^L_t\bigr),\qquad
\mathbf D_t=\mathbf L_t-\mathbf L_t',\qquad
\widehat{\mathbf S}_t=\mathcal Q_t\bigl(\mathbf A_{t_0:t}\widehat{\mathbf S}_{t_0}
+\mathbf D_t\bigr),
\label{eq:merge}
\end{equation}
where $\mathbf D_t$ is the update written into the quantized state, after which
$t_0$ is set to $t$.
We compute $\mathcal T_k$ with a randomized SVD~\citep{halko2011finding}, which
finds the top-$k$ components from a few random projections instead of a full
SVD (implementation details in App.~\ref{app:impl}).
It returns $\mathbf L_t'$ in factored form, so the bypass restarts from $k$
columns after each merge and holds at most $k+m$ columns, i.e.,
$(k+m)(d_k+d_v)$ elements.
Fig.~\ref{fig:overview} illustrates the steps between two merges, where
$\mathbf L^{(j)}$ denotes the bypass after $j$ updates are appended since the
last merge, so that $\mathbf L^{(0)}=\mathbf L'$.

\subsection{BDR: Preserving the Direction of Lost Updates}
\label{sec:bdr}
Beyond whether an update is lost (Sec.~\ref{sec:2-3}), how it is rounded also
determines the update actually written into the state.
Consider a single element at a merge (Eq.~\ref{eq:merge}), and let $s$, $d$,
and $\widetilde d$ denote its elements in
$\mathbf A_{t_0:t}\widehat{\mathbf S}_{t_0}$, $\mathbf D_t$, and the effective
update $\widetilde{\mathbf D}_t=\widehat{\mathbf S}_t-\mathbf A_{t_0:t}\widehat{\mathbf S}_{t_0}$.
The effective update then splits into two parts:
\begin{equation}
\widetilde d
=\mathcal Q_t(s+d)-s
=\underbrace{\mathcal Q_t(s+d)-\mathcal Q_t(s)}_{\text{rounded update}}
+\underbrace{\mathcal Q_t(s)-s}_{\text{re-rounding shift}}.
\label{eq:split}
\end{equation}
The two parts relate to $d$ differently.
The rounded update keeps the sign of $d$, since it is either zero or a multiple
of $\Delta_t$ with the sign of $d$.
In contrast, the re-rounding shift does not follow $d$.
Since the grid is refitted at every rounding, $s$ generally no longer lies on
the grid and is moved to its nearest grid point, which can lie on either side
of $s$.
The shift can therefore have the opposite sign of $d$.
Consequently, the effective update deviates from $d$ in one of three ways, as
illustrated in Fig.~\ref{fig:correction} (left):
(i) \emph{undershoot}, where $\widetilde d$ keeps the sign of $d$ but
$|\widetilde d|<|d|$;
(ii) \emph{overshoot}, where $\widetilde d$ keeps the sign of $d$ but
$|\widetilde d|>|d|$; and
(iii) \emph{reversal}, where $\widetilde d$ has the opposite sign of $d$.
A reversal arises only from a small update that is lost in rounding, with
$|d|<\Delta_t/2$ (Lemma~\ref{lem:correction}\,(i)).

\paragraph{Reversals are the most harmful.}
To compare the three cases, we remove the rounding errors of each case
separately under 4-bit quantization and measure the change in perplexity.
At every step, we set $\widetilde d=d$ on a random fraction of the elements in
one case.
As Fig.~\ref{fig:correction} (right) shows, removing the errors of reversals
reduces perplexity the most on all four models at the same fraction.
In contrast, removing the errors of overshoots helps less, and removing those
of undershoots does not help.
This agrees with Sec.~\ref{sec:2-3}.
When an update is lost, the rounded update in Eq.~\ref{eq:split} vanishes, so
the error consists of $-d$ and the re-rounding shift.
The part $-d$ accumulates coherently, and in a reversal the re-rounding shift
points the same way and further enlarges it.

\begin{figure}[t]
  \centering
  \includegraphics[height=3.6cm]{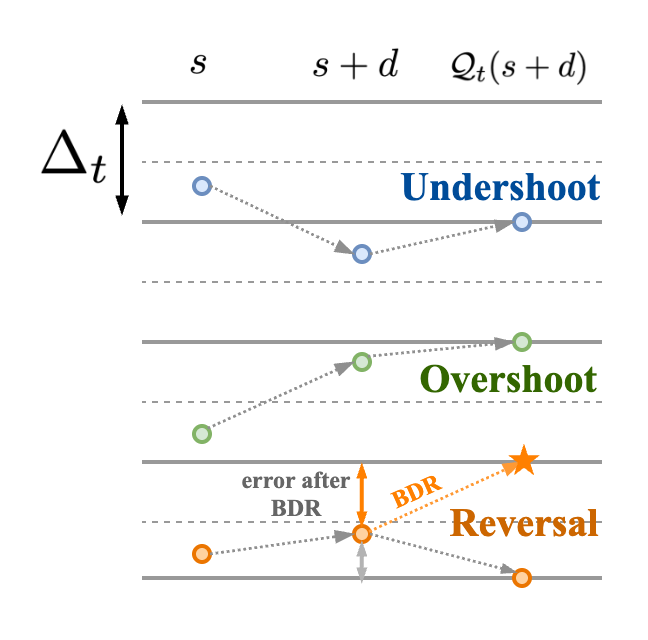}
  \hfill
  \includegraphics[height=3.6cm]{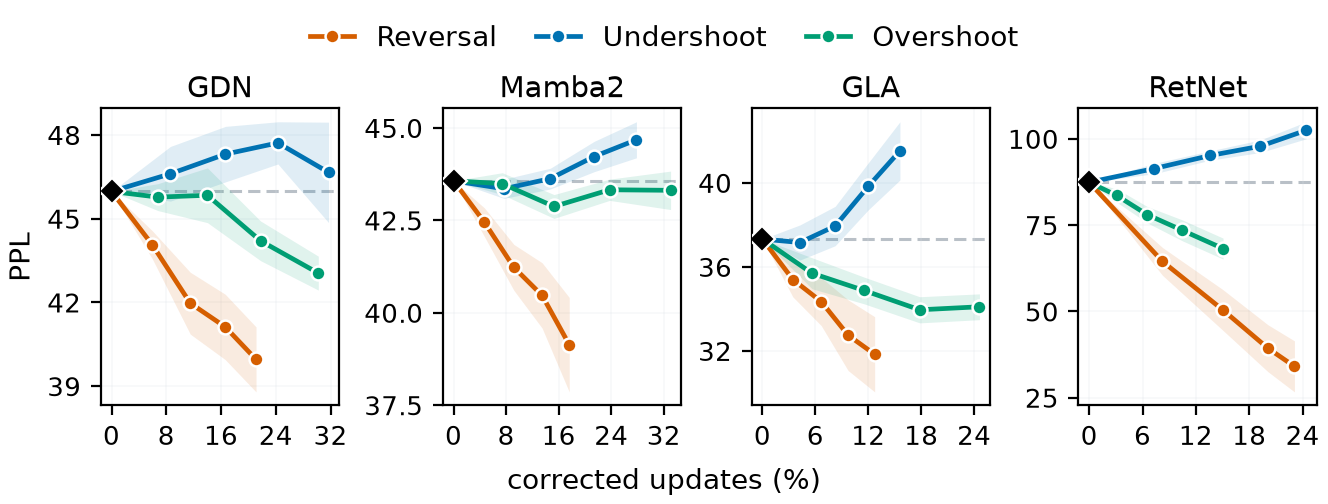}
  \caption{Left: three ways rounding distorts an update $d$, and the correction of
BDR. Right: perplexity after removing the rounding errors of each case.}
  \label{fig:correction}
  \vspace{-0.4cm}
\end{figure}

\paragraph{Bounded directional rounding.}
Since reversals are the most harmful, we correct them by moving the rounded
element $\mathcal Q_t(s+d)$ one grid step in the direction of $d$, as marked by
the star in Fig.~\ref{fig:correction} (left).
The effective update then follows $d$.
The corrected element, however, lies farther from $s+d$, so the rounding error
grows and the reversal becomes an overshoot
(Lemma~\ref{lem:correction}\,(ii)--(iii)).
Since the move spans a full grid step, the corrected update exceeds
$\Delta_t/2$ even when $d$ is tiny, which amplifies a small $d$ many times
(Lemma~\ref{lem:correction}\,(ii)).
To bound this amplification, we correct a reversal only when
$|d|>\epsilon_B\Delta_t$.
This gives BDR, which rounds $s+d$ as
\begin{equation}
\mathcal Q_t^{\mathrm{BDR}}(s+d)=
\begin{cases}
\mathcal Q_t(s+d)+\operatorname{sign}(d)\,\Delta_t,
& \text{if } \widetilde d\,d<0 \text{ and } |d|>\epsilon_B\Delta_t,\\
\mathcal Q_t(s+d), & \text{otherwise}.
\end{cases}
\label{eq:bdr}
\end{equation}
The corrected update is then less than $|d|/\epsilon_B$
(Lemma~\ref{lem:correction}\,(iv)).
A smaller $\epsilon_B$ thus corrects more reversals but allows larger
amplification, and we examine this trade-off in App.~\ref{app:ablation}.
Although BDR enlarges the rounding error (Lemma~1~(iii)), it improves the model (App.~A.5), suggesting that the direction of an update matters more than its rounding error at a single step.
Since BDR only changes the rounding rule, it requires no additional storage.
Fig.~\ref{fig:overview} (right) illustrates the procedure of BDR, where the
direction corrector applies Eq.~\ref{eq:bdr} to the reversed elements with
$|d|>\epsilon_B\Delta_t$.

%% file: sec/5_eval.tex
\vspace{-0.2cm}
\section{Experiments}
\label{sec:exp}
\vspace{-0.1cm}

\begin{table*}[t]
\centering
\footnotesize
\setlength{\tabcolsep}{4pt}
\renewcommand{\arraystretch}{0.85}
\setlength{\aboverulesep}{1.5pt}
\setlength{\belowrulesep}{1.5pt}
\begin{tabular}{llccccc|cccc}
\toprule
& & & \multicolumn{4}{c}{PG-19 PPL $\downarrow$} & \multicolumn{4}{c}{WikiText-103 PPL $\downarrow$} \\
\cmidrule(lr){4-7}\cmidrule(lr){8-11}
Model & Method & Bits & 1K & 2K & 4K & 8K & 1K & 2K & 4K & 8K \\
\midrule
\multirow{4}{*}{GDN}
& FP32     & 32.0 & 84.78 & 76.12 & 70.64 & 66.84 & 30.84 & 26.38 & 24.85 & 24.19 \\
& INT4     & 4.50 & 96.21 & 101.08 & 141.76 & 220.36 & 36.10 & 36.49 & 48.15 & 79.76 \\
& INT4+Had & 4.50 & 95.56 & 96.98 & 112.52 & 137.94 & 35.23 & 34.69 & 40.92 & 53.71 \\
\rowcolor{highlight}
& Ours     & 5.25 & 85.91 & 77.90 & 74.11 & 72.28 & 31.38 & 27.23 & 26.27 & 26.85 \\
\midrule
\multirow{4}{*}{GLA}
& FP32     & 32.0 & 64.71 & 60.89 & 57.96 & 55.85 & 25.06 & 21.66 & 20.69 & 21.57 \\
& INT4     & 4.50 & 93.64 & 104.05 & 125.15 & 158.79 & 30.02 & 32.53 & 41.68 & 66.55 \\
& INT4+Had & 4.50 & 88.28 & 89.00 & 94.07 & 99.13 & 31.22 & 31.21 & 34.18 & 41.69 \\
\rowcolor{highlight}
& Ours     & 5.31 & 65.82 & 62.51 & 60.89 & 62.24 & 25.47 & 22.36 & 22.19 & 24.86 \\
\midrule
\multirow{4}{*}{Mamba2}
& FP32     & 32.0 & 77.88 & 70.40 & 66.40 & 63.99 & 29.43 & 26.20 & 25.32 & 25.19 \\
& INT4     & 4.50 & 86.79 & 81.90 & 83.45 & 87.96 & 32.96 & 31.65 & 33.89 & 37.43 \\
& INT4+Had & 4.50 & 88.59 & 84.87 & 86.01 & 88.21 & 32.22 & 30.54 & 32.36 & 35.74 \\
\rowcolor{highlight}
& Ours     & 5.64 & 80.30 & 73.37 & 70.68 & 69.72 & 30.30 & 27.60 & 27.66 & 28.78 \\
\midrule
\multirow{4}{*}{RetNet}
& FP32     & 32.0 & 66.45 & 62.46 & 61.07 & 60.02 & 25.66 & 22.38 & 22.51 & 24.30 \\
& INT4     & 4.50 & 190.65 & 255.75 & 533.85 & 3170.42 & 59.22 & 75.61 & 179.61 & 1193.21 \\
& INT4+Had & 4.50 & 147.93 & 203.44 & 460.38 & 2749.92 & 46.25 & 60.32 & 142.60 & 918.77 \\
\rowcolor{highlight}
& Ours     & 5.25 & 69.66 & 67.88 & 70.22 & 75.96 & 26.68 & 23.76 & 24.97 & 28.80 \\
\bottomrule
\end{tabular}
\caption{Language modeling quality of 1.3B models on PG-19 and WikiText-103.
INT4+Had applies a Hadamard rotation before INT4 quantization. Bits is the
average storage per state element over decoding. The configurations of our method are listed in Table~\ref{tab:config}.}
\vspace{-0.2cm}
\label{tab:wiki_ppl}
\end{table*}

\paragraph{Models.}
We evaluate four recurrent models at the 1.3B scale: Gated DeltaNet
~\citep{yang2025gated}, GLA~\citep{yang2024gated},
Mamba2~\citep{dao2024transformers}, and RetNet~\citep{sun2023retentive}.
For GLA and RetNet, we use the public checkpoints released by the
flash-linear-attention (FLA) library\footnote{\url{https://huggingface.co/fla-hub}}.
Since FLA provides no 1.3B checkpoints for GDN and Mamba2, we pretrain them
ourselves on 100B tokens of FineWeb-Edu~\citep{penedo2024fineweb} with a sequence
length of 4,096.
Training uses AdamW~\citep{loshchilov2017decoupled} with a peak learning rate of
$4\times10^{-4}$, cosine decay, a weight decay of 0.1, and a batch size of 0.5M
tokens.
All state quantization methods compared below are applied directly to these
checkpoints without fine-tuning.
The hyperparameters of our method for each model are listed in
Table~\ref{tab:config}, and ablation studies are provided in
App.~\ref{app:ablation}.

\paragraph{Language modeling and commonsense reasoning.}
We measure word-level perplexity on PG-19~\citep{Rae2020Compressive} and
WikiText-103~\citep{merity2016pointer} at context lengths from 1K to 8K tokens
(Table~\ref{tab:wiki_ppl}).
We further evaluate LAMBADA~\citep{paperno2016lambada} and commonsense reasoning
on SciQ~\citep{welbl2017crowdsourcing}, COPA~\citep{roemmele2011choice}, and
HellaSwag~\citep{zellers2019hellaswag} with the LM Evaluation
Harness~\citep{gao2021framework} (Table~\ref{tab:zeroshot}).
We compare our method with full-precision FP32 states and two 4-bit baselines.
INT4 stores the state as 4-bit integers using the quantizer of
App.~\ref{app:quant} at every step, and INT4+Had applies a Hadamard rotation
before INT4 quantization to suppress outliers~\citep{ashkboos2024quarot}.
First, the degradation of INT4 grows with context length and follows the
coherent gain in Sec.~\ref{sec:2-2}, being mildest for Mamba2 and most severe
for RetNet.
Likewise, INT4 clearly degrades LAMBADA, which relies on a broad context, but
changes the commonsense tasks with short inputs only slightly.
Second, INT4+Had improves INT4 at long contexts on GDN, GLA, and RetNet, but
still degrades sharply as the context grows.
Reducing the per-step error thus cannot prevent the accumulation analyzed in
Sec.~\ref{sec:error}.
Third, our method stays far closer to FP32 than both baselines on all four
models, and its advantage grows with context length.
On LAMBADA, it brings perplexity close to FP32 and keeps accuracy within 0.5
points of FP32 on GDN, GLA, and Mamba2.

\begin{table*}[t]
\centering
\small
\setlength{\tabcolsep}{5pt}
\renewcommand{\arraystretch}{0.8}
\setlength{\aboverulesep}{1.5pt}
\setlength{\belowrulesep}{1.5pt}
\begin{tabular}{ll|ccccc}
\toprule
Model & Method & LMB-PPL $\downarrow$ & LMB-Acc $\uparrow$ & SciQ $\uparrow$ & COPA $\uparrow$ & HS $\uparrow$ \\
\midrule
\multirow{3}{*}{GDN}
 & FP32 & 12.53 & 46.38 & 90.9 & 73.0 & 55.79 \\
 & INT4   & 17.94 & 39.90 & 90.3 & 74.0 & 55.54 \\
\rowcolor{highlight}
 & Ours & 12.85 & 46.01 & 91.2 & 73.0 & 55.84 \\
\midrule
\multirow{3}{*}{GLA}
 & FP32 & 15.56 & 46.46 & 80.7 & 70.0 & 48.86 \\
 & INT4   & 22.48 & 38.02 & 79.1 & 69.0 & 48.51 \\
\rowcolor{highlight}
 & Ours & 15.75 & 46.05 & 80.7 & 70.0 & 48.86 \\
\midrule
\multirow{3}{*}{Mamba2}
 & FP32 & 10.82 & 50.18 & 88.9 & 80.0 & 59.36 \\
 & INT4   & 12.13 & 47.76 & 89.4 & 81.0 & 59.05 \\
\rowcolor{highlight}
 & Ours & 10.89 & 49.85 & 89.5 & 79.0 & 59.27 \\
\midrule
\multirow{3}{*}{RetNet}
 & FP32 & 17.00 & 43.08 & 83.3 & 72.0 & 48.14 \\
 & INT4   & 19.00 & 42.67 & 81.5 & 73.0 & 46.64 \\
\rowcolor{highlight}
 & Ours & 17.84 & 42.42 & 82.5 & 72.0 & 48.07 \\
\bottomrule
\end{tabular}
\caption{Zero-shot performance of 1.3B models on language modeling and commonsense
reasoning benchmarks. LMB: LAMBADA; HS: HellaSwag, reported as
length-normalized accuracy.}
\vspace{-0.cm}
\label{tab:zeroshot}
\end{table*}

\paragraph{Recall-intensive tasks.}
We further evaluate the recall-intensive tasks collected by
\citet{arora2024just}, as shown in Table~\ref{tab:recall}.
These tasks require the model to retrieve specific information from the context,
and thus directly test the key--value associations stored in the state.
They include information extraction on FDA~\citep{arora2023language} and
SWDE~\citep{lockard2019openceres}, and question answering on
NQ~\citep{kwiatkowski2019natural}, SQuADv2~\citep{rajpurkar2018know},
TriviaQA~\citep{joshi2017triviaqa}, and DROP~\citep{dua2019drop}.
First, INT4 substantially degrades recall even at contexts of only 512 and 1K
tokens.
The drop is largest on FDA, which requires copying exact values from the context.
Second, our method narrows the gap to FP32 on all four models.
On GDN and GLA, its average accuracy is within about one point of FP32.
On RetNet, where INT4 loses the most, it recovers most of the lost accuracy.
Hence, preserving state updates also preserves the ability to recall information
from the context.

\begin{table*}[t]
\centering
\small
\setlength{\tabcolsep}{5pt}
\renewcommand{\arraystretch}{0.85}
\setlength{\aboverulesep}{1.5pt}
\setlength{\belowrulesep}{1.5pt}
\begin{tabular}{llccccccccc|c}
\toprule
Model & Method & F-512 & F-1K & S-512 & S-1K & N-512 & N-1K & SQ & TQA & DR & Avg \\
\midrule
\multirow{3}{*}{GDN}
 & FP32 & 51.2 & 41.1 & 42.9 & 36.7 & 31.1 & 31.1 & 41.0 & 57.7 & 21.7 & 39.4 \\
 & INT4   & 30.1 & 25.4 & 33.7 & 25.5 & 26.5 & 24.4 & 39.0 & 56.7 & 16.3 & 30.8 \\
\rowcolor{highlight}
 & Ours & 52.5 & 39.1 & 41.2 & 35.7 & 28.2 & 29.4 & 41.7 & 58.0 & 20.7 & 38.5 \\
\midrule
\multirow{3}{*}{GLA}
 & FP32 & 64.9 & 46.8 & 44.2 & 31.3 & 34.0 & 30.3 & 40.3 & 53.7 & 20.7 & 40.7 \\
 & INT4   & 20.4 & 13.7 & 36.1 & 28.2 & 27.7 & 25.2 & 33.7 & 47.3 & 17.0 & 27.7 \\
\rowcolor{highlight}
 & Ours & 59.9 & 41.1 & 41.2 & 31.6 & 34.5 & 30.7 & 41.0 & 52.7 & 24.7 & 39.7 \\
\midrule
\multirow{3}{*}{Mamba2}
 & FP32 & 48.5 & 34.1 & 41.5 & 29.9 & 35.7 & 34.0 & 38.7 & 58.7 & 20.0 & 37.9 \\
 & INT4   & 28.1 & 15.4 & 31.0 & 19.0 & 29.8 & 23.1 & 39.3 & 55.7 & 20.0 & 29.0 \\
\rowcolor{highlight}
 & Ours & 36.5 & 24.1 & 35.7 & 21.1 & 34.9 & 27.7 & 40.0 & 57.3 & 20.7 & 33.1 \\
\midrule
\multirow{3}{*}{RetNet}
 & FP32 & 68.2 & 39.8 & 48.0 & 32.3 & 29.4 & 28.6 & 44.7 & 47.3 & 23.0 & 40.1 \\
 & INT4   & 13.7 & 4.7  & 31.0 & 17.3 & 17.2 & 14.7 & 34.7 & 43.3 & 16.0 & 21.4 \\
\rowcolor{highlight}
 & Ours & 59.9 & 32.8 & 45.6 & 25.5 & 26.5 & 23.9 & 42.0 & 47.3 & 20.7 & 36.0 \\
\bottomrule
\end{tabular}
\caption{Accuracy $\uparrow$ on recall-intensive tasks~\citep{arora2024just} with 1.3B models. F: FDA, S: SWDE, N: NQ, SQ: SQuADv2, TQA: TriviaQA, DR: DROP; 512 and 1K denote the context length.}
\label{tab:recall}
\vspace{-0.4cm}
\end{table*}

\paragraph{Efficiency.}
We implement each decoding step of our method with fused Triton kernels.
Fig.~\ref{fig:efficiency} compares our method with FP32 states on an RTX 3060 GPU
at batch sizes of 16, 32, and 48.
The left panel reports the memory occupied by the recurrent states of all layers,
excluding weights and activations.
The right panel reports the end-to-end decoding throughput of the whole model,
measured under the same setup for both methods.
First, our method reduces state memory by $5.3\times$ on GDN, GLA, and RetNet,
and by $4.7\times$ on Mamba2.
The reduction is smaller on Mamba2 because its state per head is smaller, so the
bypass takes a larger share of the storage.
Second, at a batch size of 48, our method speeds up decoding by
1.3--1.9$\times$, although LRB and BDR add extra computation.
This is because decoding reads and writes the entire state at every step, so
storing the state in low precision reduces memory traffic.
Moreover, the speedup grows with batch size, as the state becomes a larger
bottleneck in larger batches.

\begin{figure}
    \centering
    \includegraphics[width=0.95\linewidth]{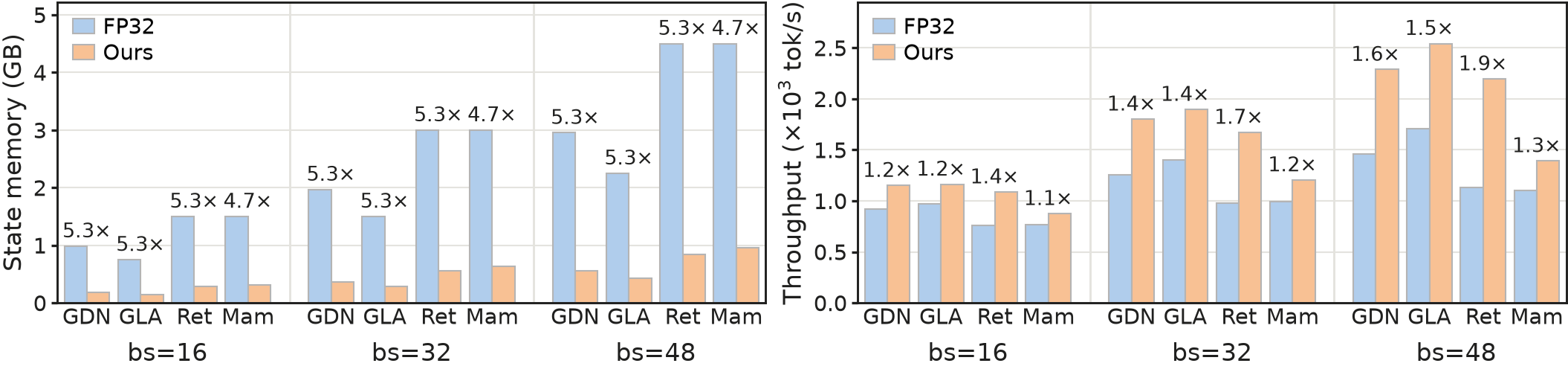}
    \vspace{-0.2cm}
\caption{State memory and end-to-end decoding throughput of our
method and FP32 states.}
\vspace{-0.2cm}
\label{fig:efficiency}
\end{figure}

%% file: sec/6_related.tex
\section{Related Work}
\label{sec:related}

\paragraph{LLM quantization.}
Post-training quantization reduces the inference cost of large language models
(LLMs) through low-precision representations.
For model weights, GPTQ~\citep{frantar2022gptq} uses approximate second-order
information and AWQ~\citep{lin2024awq} uses activation statistics to reduce
quantization error.
Moreover, SmoothQuant~\citep{xiao2023smoothquant}, OmniQuant~\citep{shao2024omniquant},
and QuaRot~\citep{ashkboos2024quarot} quantize activations by suppressing
outliers, while KIVI~\citep{liu2024kivi} and KVQuant~\citep{hooper2024kvquant}
extend quantization to the KV cache.
All these methods quantize each tensor once and thus aim to reduce the error of
that single quantization.
In contrast, a recurrent state is requantized at every step, so its errors
accumulate through the recurrence as analyzed in Sec.~\ref{sec:2-2}.

\vspace{-0.1cm}
\paragraph{Quantization error compensation.}
Low-precision training faces a similar problem, as small weight updates can be
lost to rounding.
Mixed-precision training therefore keeps full-precision master
weights~\citep{micikevicius2017mixed}, and \citet{zamirai2020revisiting} apply
stochastic rounding and Kahan summation to BFloat16 training.
Similarly, error feedback~\citep{karimireddy2019error} stores the residual of
gradient compression and adds it to the next gradient before compression.
Master weights, Kahan summation, and error feedback all keep a buffer as large as
the tensor they protect, which is acceptable because their main savings lie in
computation, activation memory, or communication.
In contrast, our goal is to reduce state storage.
Such a buffer would take as much memory as the full-precision state.
LRB therefore keeps a low-rank buffer by exploiting the outer-product structure of
state updates.
Moreover, unlike weights stored on a fixed floating-point grid, a recurrent state
is rounded onto a grid that is refitted each time, so rounding can also reverse an
update, which BDR corrects.

\vspace{-0.1cm}
\paragraph{Recurrent state quantization.}
Quantization of recurrent models has mainly targeted Mamba models.
Quamba~\citep{quamba}, MambaQuant~\citep{xu2025mambaquant}, and
Quamba2~\citep{quamba2} focus on weights and activations, and Quamba2 also
quantizes the cached states.
Q-Mamba~\citep{tianqi-etal-2025-q} quantizes the state cache of Mamba2 to 4 bits with separate scales along the state and channel dimensions, and calibrates the quantized model by block-wise reconstruction on calibration data.
DAMP~\citep{zhang2026damp} quantizes the states of GDN and Kimi Delta Attention
by keeping the riskiest key channels in FP16 and the rest in INT8, with the risk
estimated offline from quantization-error energy and decay-based persistence.
Our method differs from these works in two ways.
First, whereas these methods reduce the error of each state quantization, our
method targets the coherence of errors across steps by preserving the updates
that rounding would lose or reverse.
Second, it requires no gradient-based fine-tuning or calibration statistics, and applies to
diverse recurrent models, including GDN, GLA, Mamba2, and RetNet.

%% file: sec/7_discussion.tex
\section{Conclusion}
\label{sec:conclusion}
\vspace{-0.2cm}

We study low-bit storage of recurrent states from the perspective of the updates
written into them.
We show that per-step quantization errors accumulate coherently through the
recurrence, and that this coherence arises mainly from updates lost in rounding.
We further find that rounding can also reverse updates, which is the most harmful
way it distorts an update.
Guided by these findings, LRB preserves the magnitude of small updates before
rounding, and BDR preserves their direction at rounding while bounding the
resulting overshoot.
Across GDN, GLA, Mamba2, and RetNet, our method keeps model quality much closer to
FP32 than INT4, while reducing state memory by 4.7--5.3$\times$ and speeding up
decoding by up to 1.9$\times$.

\paragraph{Discussion and Limitations.}
Our evaluation covers four recurrent architectures, and extending it to more
models, including hybrid models that combine recurrent layers with softmax
attention~\citep{chandiramani2026nemotron}, is a natural next step.
Moreover, we quantize only the recurrent states and keep weights and activations
in their original precision, so combining our method with weight and activation
quantization may further reduce inference cost.

%% file: sec/appendix_proof.tex
\newpage
\section{Appendix}
\label{sec:appendix}

\subsection{State Quantization Details}
\label{app:quant}

\paragraph{Layout and grouping.}
For every model, we write the per-head state as
$\mathbf S_t\in\mathbb R^{d_k\times d_v}$, with rows indexed by key dimensions
and columns by value dimensions.
The state is quantized in groups of 128 contiguous elements along the key
dimension, so each group lies within a single column.
All groups use the same bit width and are quantized independently.

\paragraph{Quantizer.}
Let $x_1,\dots,x_{128}$ be the elements of one group to be stored at step $t$.
The quantizer fits an $n$-bit uniform grid to their range and rounds each element
to the nearest grid point:
\begin{equation}
\Delta_t=\frac{\max_i x_i-\min_i x_i}{2^n-1},
\qquad
z_t=\Big\lfloor-\frac{\min_i x_i}{\Delta_t}\Big\rceil,
\end{equation}
\begin{equation}
q_i=\operatorname{clamp}\!\Big(\Big\lfloor\frac{x_i}{\Delta_t}\Big\rceil+z_t,\ 0,\ 2^n-1\Big),
\qquad
\mathcal Q_t(x_i)=\Delta_t\,(q_i-z_t),
\end{equation}
where $\lfloor\cdot\rceil$ denotes rounding to the nearest integer and
$\operatorname{clamp}(x,a,c)=\min(\max(x,a),c)$.
Since $z_t$ is an integer, every grid point is an integer multiple of $\Delta_t$,
so zero is represented exactly.
Because the grid covers the range of the group up to the rounding of $z_t$,
nearest rounding changes each element by at most $\Delta_t/2$.
We use $n=4$ and store the integer codes $q_i$, together with $\Delta_t$ and
$z_t$ in FP32 once per group, giving an average of
$(4\cdot 128+2\cdot 32)/128=4.5$ bits per state element.

\paragraph{Refitting and computation.}
The INT4 baseline quantizes the state at every step, whereas our method quantizes
it only at merges.
In both cases, $\Delta_t$ and $z_t$ are refitted to the state being stored,
namely $\mathbf A_t\widehat{\mathbf S}_{t-1}+\mathbf B_t$ in
Eq.~\ref{eq:quant} or
$\mathbf A_{t_0:t}\widehat{\mathbf S}_{t_0}+\mathbf D_t$ in
Eq.~\ref{eq:merge}, so the grid of each group changes over time.
Between merges, our method keeps the stored codes unchanged and applies the
accumulated decay when the state is read.
This accumulated decay adds $d_k$ FP32 scalars per head for GLA and one FP32 scalar
per head for the other models.
All computation is performed in FP32: the stored state is dequantized, decayed,
and updated before being quantized again.
The output at each step is read from the updated state before it is quantized.

\paragraph{Evaluation protocol.}
In all evaluations, every method runs each model in its recurrent form from an empty state and processes one token at a time, including the context tokens. For the quantized methods, the state is kept in low-precision storage throughout, exactly as in decoding, so that a context of $T$ tokens emulates $T$ decoding steps with a quantized state.

\subsection{Statistics of Lost Updates}
\label{app:lost_stats}

Table~\ref{tab:lost_stats} supports the claim in Sec.~\ref{sec:2-3} that a single
update is typically much smaller than the grid spacing and is often lost in
rounding.
We use the same INT4 trajectories on WikiText-103 \citep{merity2016pointer} as in Fig.~1, where the updates $\mathbf{B}_t$ are those computed by the INT4 model itself.
For each nonzero element $b$ of $\mathbf B_t$, we compute its ratio $|b|/\Delta_t$
to the grid spacing of its group, and report the quartiles $Q_{25}$, $Q_{50}$,
and $Q_{75}$ of this ratio.
We also report the fraction of these elements whose update is lost, i.e., whose
rounding result stays unchanged.

\begin{table}[h]
\centering
\small
\begin{tabular}{lcccc}
\toprule
Model & $Q_{25}$ & $Q_{50}$ & $Q_{75}$ & Lost fraction \\
\midrule
GDN    & 0.060  & 0.207 & 0.582 & 68.0\% \\
GLA    & 0.021  & 0.072 & 0.209 & 83.7\% \\
Mamba2 & 0.052  & 0.188 & 0.543 & 72.4\% \\
RetNet & 0.005 & 0.017 & 0.046 & 96.4\% \\
\bottomrule
\end{tabular}
\caption{Quartiles of $|b|/\Delta_t$ and fraction of lost updates under INT4 at
$T=4$K on WikiText-103.}
\label{tab:lost_stats}
\end{table}

On all four models, the median update is at most about 20\% of the grid
spacing, and 68--96\% of the updates are lost at each step.
RetNet has the smallest updates relative to the grid spacing and loses almost all
of them, consistent with its largest coherent gain in Sec.~\ref{sec:2-2}.

\subsection{Properties of BDR}
\label{app:proofs}

Consider a single element at a rounding step.
As in Sec.~\ref{sec:bdr}, let $s$ be its element in the decayed state, $d$ the
update written to it, and $\Delta_t$ the grid spacing of its group.
Nearest rounding stores $\mathcal Q_t(s+d)$, which gives the effective update
$\widetilde d=\mathcal Q_t(s+d)-s$.
When BDR corrects the element, it stores
$\mathcal Q_t(s+d)+\operatorname{sign}(d)\,\Delta_t$ (Eq.~\ref{eq:bdr}), which
gives the corrected update
$\widetilde d'=\widetilde d+\operatorname{sign}(d)\,\Delta_t$.
We assume that this grid point lies within the range of the grid; otherwise,
BDR leaves the element unchanged.

\begin{lemma}
\label{lem:correction}
Suppose that the update is reversed, i.e., $\widetilde d\,d<0$. Then:
\begin{enumerate}
  \item[(i)] $\mathcal Q_t(s+d)=\mathcal Q_t(s)$, so the update is lost in
  rounding; moreover, $|d|+|\widetilde d|\le\Delta_t/2$, and in particular
  $|d|<\Delta_t/2$;
  \item[(ii)] $\widetilde d'$ has the sign of $d$, and
  $|d|+\Delta_t/2\le|\widetilde d'|<\Delta_t$, so the correction turns the
  reversal into an overshoot;
  \item[(iii)] $|\widetilde d'-d|\ge\Delta_t/2\ge|\widetilde d-d|$, so the
  correction enlarges the rounding error;
  \item[(iv)] if $|d|>\epsilon_B\Delta_t$, then $|\widetilde d'|<|d|/\epsilon_B$.
\end{enumerate}
\end{lemma}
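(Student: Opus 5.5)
By symmetry (replacing $s,d$ by $-s,-d$ and reflecting the grid, which maps grid points to grid points since every grid point is an integer multiple of $\Delta_t$), we assume $d>0$, so reversal means $\widetilde d<0$, i.e.\ $\mathcal Q_t(s+d)<s$. We use two facts about the quantizer of App.~\ref{app:quant}. First, $\mathcal Q_t$ is monotone nondecreasing. Second, nearest rounding moves each element by at most $\Delta_t/2$, so $|\mathcal Q_t(x)-x|\le\Delta_t/2$ for the elements being stored (the grid covers the group range). We assume this bound also holds for the auxiliary point $s$, which lies within the group's range whenever $s+d$ and $s$ straddle no range boundary. This is the one step I expect to need care: clamping at the ends of the grid could in principle break it. I would handle it by noting that $s+d$ is an actual group element, so it is within $\Delta_t/2$ of the grid. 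Since $\mathcal Q_t(s+d)<s<s+d$, the point $s$ lies between a grid point and an in-range element, and is therefore itself covered.

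\textbf{(i).} By monotonicity, $\mathcal Q_t(s)\le\mathcal Q_t(s+d)<s$. If $\mathcal Q_t(s)<\mathcal Q_t(s+d)$, then $\mathcal Q_t(s)\le s-\Delta_t$, contradicting the rounding bound at $s$. Hence $\mathcal Q_t(s+d)=\mathcal Q_t(s)$, and the update is lost. Writing $g=\mathcal Q_t(s+d)$, we have $(s+d)-g=d+|\widetilde d|$, and the rounding bound at $s+d$ gives $d+|\widetilde d|\le\Delta_t/2$. Since $|\widetilde d|>0$, it follows that $d<\Delta_t/2$.

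\textbf{(ii).} We have $\widetilde d'=\widetilde d+\Delta_t=\Delta_t-|\widetilde d|$. From (i), $|\widetilde d|\le\Delta_t/2-d$, so $\widetilde d'\ge\Delta_t/2+d>0$. Since $|\widetilde d|>0$, also $\widetilde d'<\Delta_t$. As $\widetilde d'>d>0$, the corrected update has the sign of $d$ and exceeds it, which is an overshoot.

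\textbf{(iii).} Using (ii), $\widetilde d'-d\ge\Delta_t/2$. Also $|\widetilde d-d|=|\widetilde d|+d\le\Delta_t/2$ by (i).

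\textbf{(iv).} By (ii), $|\widetilde d'|<\Delta_t$. If $d>\epsilon_B\Delta_t$, then $\Delta_t<d/\epsilon_B$, which gives the claim.

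Apart from the range caveat above, every step reduces to the two quantizer facts and is routine algebra.
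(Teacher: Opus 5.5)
Your proof is correct and matches the paper's argument step for step in (ii)--(iv), and also in the bound $|d|+|\widetilde d|\le\Delta_t/2$ in (i). The only difference is how you show $\mathcal Q_t(s)=\mathcal Q_t(s+d)$ in (i). The paper combines monotonicity with the fact that $\mathcal Q_t$ fixes grid points, $\mathcal Q_t(s+d)=\mathcal Q_t(\mathcal Q_t(s+d))\le\mathcal Q_t(s)\le\mathcal Q_t(s+d)$, which needs no rounding bound at the auxiliary point $s$ and so avoids your clamping caveat entirely. Your route does need that bound, and your convexity argument correctly supplies it: $s$ lies between the grid point $\mathcal Q_t(s+d)$ and the in-range element $s+d$, so it lies in the interval where rounding moves by at most $\Delta_t/2$.
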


\begin{proof}
By symmetry, assume $d>0$, so that $\widetilde d<0$ and
$\widetilde d'=\Delta_t-|\widetilde d|$.
Nearest rounding changes each element by at most $\Delta_t/2$
(App.~\ref{app:quant}), so $|\mathcal Q_t(s+d)-(s+d)|\le\Delta_t/2$.

\emph{(i)} Since $\widetilde d<0$, we have $\mathcal Q_t(s+d)<s<s+d$.
Nearest rounding is monotone, and $\mathcal Q_t(s+d)$ is itself a grid point, so
$\mathcal Q_t(s+d)=\mathcal Q_t\bigl(\mathcal Q_t(s+d)\bigr)\le\mathcal Q_t(s)
\le\mathcal Q_t(s+d)$, which gives $\mathcal Q_t(s)=\mathcal Q_t(s+d)$.
Moreover, the distance from $\mathcal Q_t(s+d)$ to $s+d$ is
$(s+d)-\mathcal Q_t(s+d)=|d|+|\widetilde d|$, which is at most $\Delta_t/2$.
Since $\widetilde d\neq0$, this also gives $|d|<\Delta_t/2$.

\emph{(ii)} By (i), $|\widetilde d|\le\Delta_t/2-|d|$, so
$\widetilde d'=\Delta_t-|\widetilde d|\ge|d|+\Delta_t/2>|d|>0$.
Since $|\widetilde d|>0$, we also have $\widetilde d'<\Delta_t$.

\emph{(iii)} Before the correction, $\widetilde d$ and $d$ have opposite signs, so
$|\widetilde d-d|=|d|+|\widetilde d|\le\Delta_t/2$ by (i).
After the correction, $\widetilde d'>d>0$ by (ii), so
$|\widetilde d'-d|=\Delta_t-|\widetilde d|-|d|\ge\Delta_t/2$, again by (i).

\emph{(iv)} By (ii), $|\widetilde d'|<\Delta_t$, and the condition
$|d|>\epsilon_B\Delta_t$ gives $\Delta_t<|d|/\epsilon_B$.
\end{proof}

\subsection{Implementation}
\label{app:impl}

\paragraph{LRB.} The factors of the bypass are stored in FP16.
We compute $\mathcal T_k$ with a randomized SVD~\citep{halko2011finding} using
$k+8$ random projections and two power iterations.
Each decoding step is implemented with fused Triton kernels, and $\mathbf L_t$
is materialized on chip only at merges.
The reported bits per state element include the storage of the bypass.

\subsection{Hyperparameters and Ablations}
\label{app:ablation}

Our method has four hyperparameters: the rank $k$ that the bypass keeps after each
merge, the merge interval $m$, the threshold $\epsilon_L$ below which accumulated
updates stay in the bypass, and the threshold $\epsilon_B$ above which BDR
corrects a reversal.
Table~\ref{tab:config} lists the configuration of each model, which we use in all
experiments of Sec.~\ref{sec:exp}.
We then examine the contributions of LRB and BDR and the effect of each
hyperparameter.

\begin{table}[h]
\centering
\small
\begin{tabular}{lcccccc}
\toprule
Model & $k$ & $m$ & $\epsilon_L$ & $\epsilon_B$ & Avg.\ bits & Peak bits \\
\midrule
GDN    & 1 & 15 & 0.50 & 0.30 & 5.25 & 6.00 \\
GLA    & 1 & 15 & 0.50 & 0.30 & 5.31 & 6.07 \\
Mamba2 & 1 & 5  & 0.10 & 0.25 & 5.64 & 6.78 \\
RetNet & 1 & 15 & 0.70 & 0.30 & 5.25 & 6.00 \\
\bottomrule
\end{tabular}
\caption{Configuration of our method for each model. Avg.\ bits and Peak bits
are the average and peak storage per state element during decoding.}
\label{tab:config}
\end{table}

\begin{table*}[h]
\centering
\footnotesize
\setlength{\tabcolsep}{4pt}
\renewcommand{\arraystretch}{0.9}
\setlength{\aboverulesep}{1.5pt}
\setlength{\belowrulesep}{1.5pt}
\begin{tabular*}{\textwidth}{@{\extracolsep{\fill}}lrrrrrrrr@{}}
\toprule
& \multicolumn{4}{c}{2K (PPL $\downarrow$)} & \multicolumn{4}{c}{8K (PPL $\downarrow$)} \\
\cmidrule(lr){2-5}\cmidrule(lr){6-9}
Method & GDN & GLA & Mamba2 & RetNet & GDN & GLA & Mamba2 & RetNet \\
\midrule
FP32 & 26.38 & 21.66 & 26.20 & 22.38 & 24.19 & 21.57 & 25.19 & 24.30 \\
INT4 & 36.49 & 32.53 & 31.65 & 75.61 & 79.76 & 66.55 & 37.43 & 1193.21 \\
BDR & 34.22 & 32.49 & 30.85 & 57.92 & 63.15 & 75.61 & 35.10 & 707.65 \\
LRB & 27.37 & 22.57 & 28.00 & 23.88 & 27.64 & 25.91 & 29.90 & 31.18 \\
LRB+BDR & \textbf{27.23} & \textbf{22.36} & \textbf{27.60} & \textbf{23.76} & \textbf{26.85} & \textbf{24.86} & \textbf{28.78} & \textbf{28.80} \\
\bottomrule
\end{tabular*}
\caption{Component ablation on WikiText-103 with the configurations in
Table~\ref{tab:config}.}
\label{tab:ablation_components}
\end{table*}

\paragraph{Components.}
Table~\ref{tab:ablation_components} isolates the contributions of LRB and BDR.
First, without LRB, the state is rounded at every step, and BDR alone corrects
the reversals after each step.
It reduces the perplexity of INT4 in all settings except GLA at 8K, especially
on RetNet, the model most affected by INT4 quantization.
The exception on GLA is likely because $\epsilon_B$ is chosen for LRB+BDR
rather than for BDR alone.
Second, LRB alone provides a larger improvement, bringing perplexity close to
FP32 at 2K and preventing the sharp increase of INT4 at 8K.
Third, combining the two gives the best results on all four models at both
context lengths, with the largest gain on RetNet at 8K.
The two components thus complement each other: LRB preserves the magnitude of
small updates, and BDR preserves the direction of those still lost in rounding.

\begin{table}[h]
\centering
\footnotesize
\setlength{\tabcolsep}{4pt}
\renewcommand{\arraystretch}{0.95}
\setlength{\aboverulesep}{1.5pt}
\setlength{\belowrulesep}{1.5pt}
\begin{tabular}{lcccccc}
\toprule
& & & \multicolumn{2}{c}{PPL $\downarrow$} & \multicolumn{2}{c}{Bits} \\
\cmidrule(lr){4-5}\cmidrule(lr){6-7}
Model & $k$ & $m$ & 2K & 8K & Avg. & Peak \\
\midrule
\multirow{5}{*}{GDN}
& 0 & 16 & 27.34 & 27.27 & 5.20 & \multirow{5}{*}{6.00} \\
& 1 & 15 & \textbf{27.23} & \textbf{26.85} & 5.25 & \\
& 2 & 14 & 27.33 & 26.97 & 5.30 & \\
& 4 & 12 & 27.33 & 27.29 & 5.39 & \\
& 8 & 8  & 27.52 & 27.98 & 5.58 & \\
\midrule
\multirow{5}{*}{GLA}
& 0 & 16 & \textbf{22.35} & 25.63 & 5.26 & \multirow{5}{*}{6.07} \\
& 1 & 15 & 22.36 & \textbf{24.86} & 5.31 & \\
& 2 & 14 & 22.48 & 25.30 & 5.36 & \\
& 4 & 12 & 22.53 & 26.13 & 5.45 & \\
& 8 & 8  & 23.11 & 28.58 & 5.64 & \\
\midrule
\multirow{5}{*}{RetNet}
& 0 & 16 & 24.26 & 77.86 & 5.20 & \multirow{5}{*}{6.00} \\
& 1 & 15 & \textbf{23.76} & \textbf{28.80} & 5.25 & \\
& 2 & 14 & 24.09 & 28.96 & 5.30 & \\
& 4 & 12 & 24.28 & 29.62 & 5.39 & \\
& 8 & 8  & 25.53 & 32.74 & 5.58 & \\
\midrule
\multirow{4}{*}{Mamba2}
& 0 & 6 & 28.03 & 29.34 & 5.45 & \multirow{4}{*}{6.78} \\
& 1 & 5 & \textbf{27.60} & \textbf{28.78} & 5.64 & \\
& 2 & 4 & 27.70 & 29.10 & 5.83 & \\
& 3 & 3 & 27.91 & 29.47 & 6.02 & \\
\bottomrule
\end{tabular}
\caption{Effect of the bypass rank $k$ on WikiText-103 perplexity. The number
of bypass columns $k+m$ is fixed at 16 for GDN, GLA, and RetNet and at 6 for
Mamba2. Bits is the storage per state element.}
\label{tab:ablation_rank}
\end{table}

\begin{table}[h]
\centering
\footnotesize
\setlength{\tabcolsep}{4pt}
\renewcommand{\arraystretch}{0.95}
\setlength{\aboverulesep}{1.5pt}
\setlength{\belowrulesep}{1.5pt}
\begin{tabular}{lccccc}
\toprule
& & \multicolumn{2}{c}{PPL $\downarrow$} & \multicolumn{2}{c}{Bits} \\
\cmidrule(lr){3-4}\cmidrule(lr){5-6}
Model & $k+m$ & 2K & 8K & Avg. & Peak \\
\midrule
\multirow{4}{*}{GDN}
& 8  & 27.99 & 29.18 & 4.88 & 5.25 \\
& 12 & 27.49 & 27.71 & 5.06 & 5.63 \\
& 16 & 27.23 & 26.85 & 5.25 & 6.00 \\
& 24 & 26.94 & 26.07 & 5.63 & 6.76 \\
\midrule
\multirow{4}{*}{GLA}
& 8  & 23.16 & 29.23 & 4.94 & 5.31 \\
& 12 & 22.59 & 26.42 & 5.13 & 5.69 \\
& 16 & 22.36 & 24.86 & 5.31 & 6.07 \\
& 24 & 22.02 & 23.32 & 5.69 & 6.82 \\
\midrule
\multirow{4}{*}{RetNet}
& 8  & 25.69 & 44.86 & 4.88 & 5.25 \\
& 12 & 24.33 & 33.25 & 5.06 & 5.63 \\
& 16 & 23.76 & 28.80 & 5.25 & 6.00 \\
& 24 & 23.26 & 27.20 & 5.63 & 6.76 \\
\midrule
\multirow{3}{*}{Mamba2}
& 4 & 27.92 & 29.65 & 5.26 & 6.02 \\
& 6 & 27.60 & 28.78 & 5.64 & 6.78 \\
& 8 & 27.52 & 28.34 & 6.02 & 7.54 \\
\bottomrule
\end{tabular}
\caption{Effect of the number of bypass columns $k+m$ on WikiText-103
perplexity, with $k=1$. Bits is the storage per state element.}
\label{tab:ablation_size}
\end{table}

\paragraph{Bypass rank $k$.}
Table~\ref{tab:ablation_rank} varies the rank $k$ that the bypass keeps after each
merge.
To compare at the same peak storage, we fix the total number of bypass columns
$k+m$, so a larger $k$ shortens the merge interval $m$.
We fix $k+m=6$ for Mamba2, since its smaller state makes each bypass column
relatively more costly.
First, keeping a single rank instead of none improves perplexity at 8K on all four
models at almost no extra storage, most notably on RetNet.
Without this rank, small updates that remain below the threshold are merged into
the state and lost in rounding.
Second, increasing $k$ beyond 1 raises the average bits but worsens perplexity,
especially at 8K.
A larger $k$ shortens $m$, so the state is rounded more often, while the
additional ranks bring little benefit.

\paragraph{Bypass size $k+m$.}
Table~\ref{tab:ablation_size} varies the number of bypass columns $k+m$ with
$k=1$, which sets the merge interval $m$ and the storage of the bypass.
First, perplexity improves steadily as the bypass grows, but with diminishing
returns.
We choose $k+m=16$ as the default ($k+m=6$ for Mamba2), since larger bypasses
bring smaller gains.
Second, even the smallest bypass, which adds only 0.38--0.76 average bits to
INT4, already removes most of the degradation of INT4 at 8K (Table~\ref{tab:wiki_ppl}).
Most of the improvement therefore comes from preserving updates rather than from
the extra bits.

\begin{figure}[h]
    \centering
    \includegraphics[width=1.0\linewidth]{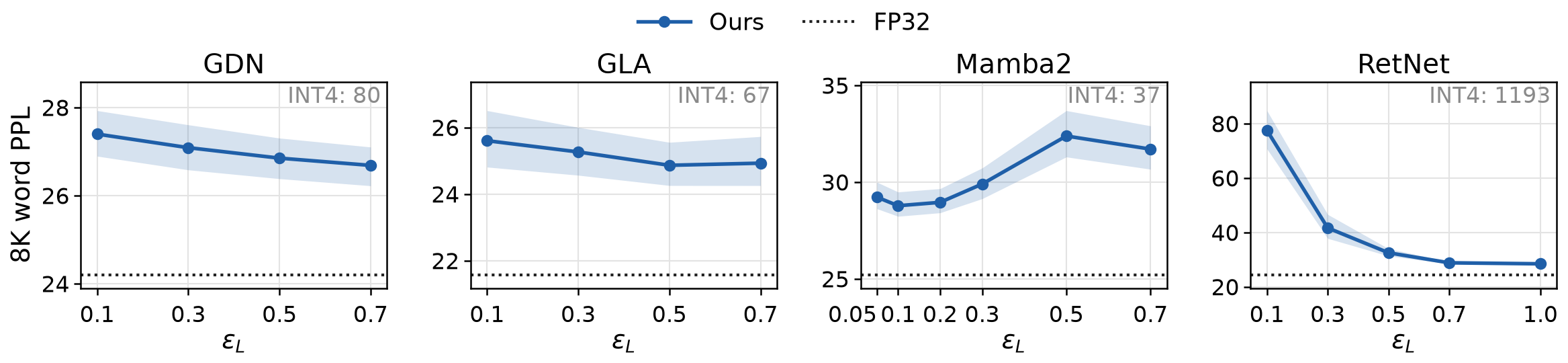}
    \caption{8K perplexity on WikiText-103 versus $\epsilon_L$, with the INT4
    perplexity noted in each panel.}
    \label{fig:epsl}
\end{figure}

\paragraph{Bypass threshold $\epsilon_L$.}
Fig.~\ref{fig:epsl} varies $\epsilon_L$, which decides which accumulated updates
stay in the bypass at each merge.
Across the tested range, our method stays far below INT4 on all four models.
The threshold balances two effects.
A small threshold merges small updates into the state, where rounding loses them,
whereas a large threshold keeps more updates than a rank-1 bypass can represent
accurately.
This balance differs across models.
RetNet, whose INT4 errors accumulate most coherently (Sec.~\ref{sec:2-2}),
benefits from larger thresholds, whereas Mamba2 favors smaller ones.

\begin{figure}[h]
    \centering
    \includegraphics[width=1.0\linewidth]{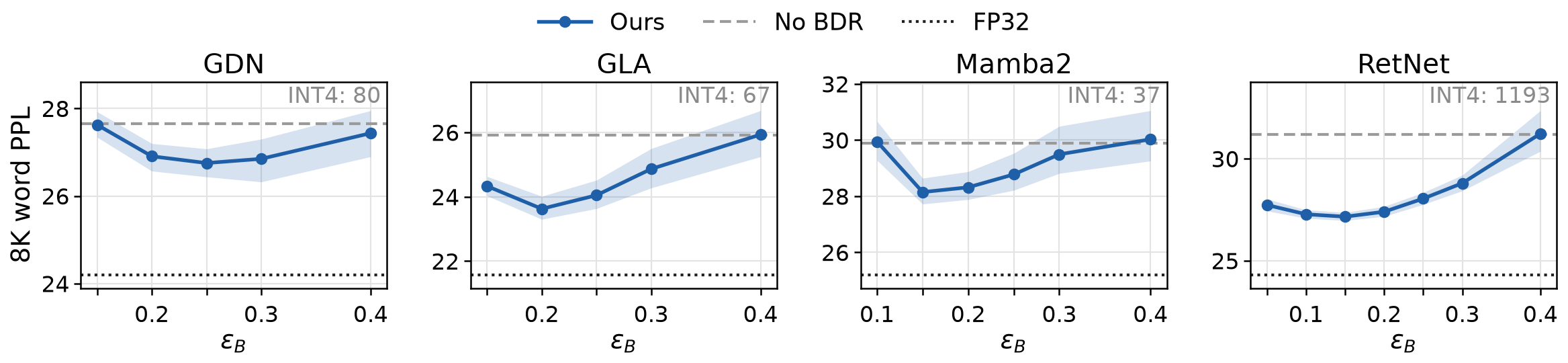}
    \caption{8K perplexity on WikiText-103 versus $\epsilon_B$, compared with LRB
    without BDR.}
    \label{fig:epsb}
\end{figure}

\paragraph{BDR threshold $\epsilon_B$.}
Fig.~\ref{fig:epsb} varies $\epsilon_B$, which bounds the overshoot introduced by
each correction (Sec.~\ref{sec:bdr}).
For $\epsilon_B$ between 0.2 and 0.3, BDR lowers perplexity below LRB alone on all
four models, most notably on RetNet.
As $\epsilon_B$ grows, BDR corrects fewer reversals, so its perplexity approaches
that of LRB alone.
Conversely, a very small $\epsilon_B$ also corrects the reversals of tiny updates,
whose large overshoots reduce the gain on GDN and GLA.

%% file: iclr2027_conference.bib
@article{frantar2022gptq,
  title={Gptq: Accurate post-training quantization for generative pre-trained transformers},
  author={Frantar, Elias and Ashkboos, Saleh and Hoefler, Torsten and Alistarh, Dan},
  journal={arXiv preprint arXiv:2210.17323},
  year={2022}
}

@article{lin2024awq,
  title={Awq: Activation-aware weight quantization for on-device llm compression and acceleration},
  author={Lin, Ji and Tang, Jiaming and Tang, Haotian and Yang, Shang and Chen, Wei-Ming and Wang, Wei-Chen and Xiao, Guangxuan and Dang, Xingyu and Gan, Chuang and Han, Song},
  journal={Proceedings of machine learning and systems},
  volume={6},
  pages={87--100},
  year={2024}
}

@inproceedings{xiao2023smoothquant,
  title={Smoothquant: Accurate and efficient post-training quantization for large language models},
  author={Xiao, Guangxuan and Lin, Ji and Seznec, Mickael and Wu, Hao and Demouth, Julien and Han, Song},
  booktitle={International conference on machine learning},
  pages={38087--38099},
  year={2023},
  organization={PMLR}
}

@inproceedings{shao2024omniquant,
  title={Omniquant: Omnidirectionally calibrated quantization for large language models},
  author={Shao, Wenqi and Chen, Mengzhao and Zhang, Zhaoyang and Xu, Peng and Zhao, Lirui and Li, Zhiqian and Zhang, Kaipeng and Peng, Gao and Qiao, Yu and Luo, Ping},
  booktitle={International Conference on Learning Representations},
  volume={2024},
  pages={45472--45496},
  year={2024}
}

@article{zhang2026damp,
  title={DAMP: Decay-Aware Mixed-Precision Recurrent-State Quantization},
  author={Zhang, Tao and Tan, Jianchao and Sun, Pingwei and Yu, Yanqi and Jiang, Zixu and Xie, Yuchen and Cai, Xunliang and Zeng, Ziqian},
  journal={arXiv preprint arXiv:2608.27513},
  year={2026}
}

@article{ashkboos2024quarot,
  title={Quarot: Outlier-free 4-bit inference in rotated llms},
  author={Ashkboos, Saleh and Mohtashami, Amirkeivan and Croci, Maximilian L and Li, Bo and Cameron, Pashmina and Jaggi, Martin and Alistarh, Dan and Hoefler, Torsten and Hensman, James},
  journal={Advances in Neural Information Processing Systems},
  volume={37},
  pages={100213--100240},
  year={2024}
}

@inproceedings{
liu2024kivi,
title={{KIVI}: A Tuning-Free Asymmetric 2bit Quantization for {KV} Cache},
author={Zirui Liu and Jiayi Yuan and Hongye Jin and Shaochen Zhong and Zhaozhuo Xu and Vladimir Braverman and Beidi Chen and Xia Hu},
booktitle={Forty-first International Conference on Machine Learning},
year={2024},
url={https://openreview.net/forum?id=L057s2Rq8O}
}

@inproceedings{
hooper2024kvquant,
title={{KVQ}uant: Towards 10 Million Context Length {LLM} Inference with {KV} Cache Quantization},
author={Coleman Richard Charles Hooper and Sehoon Kim and Hiva Mohammadzadeh and Michael W. Mahoney and Sophia Shao and Kurt Keutzer and Amir Gholami},
booktitle={The Thirty-eighth Annual Conference on Neural Information Processing Systems},
year={2024},
url={https://openreview.net/forum?id=0LXotew9Du}
}

@article{zamirai2020revisiting,
  title={Revisiting bfloat16 training},
  author={Zamirai, Pedram and Zhang, Jian and Aberger, Christopher R and De Sa, Christopher},
  journal={arXiv preprint arXiv:2010.06192},
  year={2020}
}

@inproceedings{karimireddy2019error,
  title={Error feedback fixes signsgd and other gradient compression schemes},
  author={Karimireddy, Sai Praneeth and Rebjock, Quentin and Stich, Sebastian and Jaggi, Martin},
  booktitle={International conference on machine learning},
  pages={3252--3261},
  year={2019},
  organization={PMLR}
}

@inproceedings{
dao2024transformers,
title={Transformers are {SSM}s: Generalized Models and Efficient Algorithms Through Structured State Space Duality},
author={Tri Dao and Albert Gu},
booktitle={Forty-first International Conference on Machine Learning},
year={2024},
url={https://openreview.net/forum?id=ztn8FCR1td}
}

@inproceedings{
yang2025gated,
title={Gated Delta Networks: Improving Mamba2 with Delta Rule},
author={Songlin Yang and Jan Kautz and Ali Hatamizadeh},
booktitle={The Thirteenth International Conference on Learning Representations},
year={2025},
url={https://openreview.net/forum?id=r8H7xhYPwz}
}

@inproceedings{
quamba,
title={Quamba: A Post-Training Quantization Recipe for Selective State Space Models},
author={Hung-Yueh Chiang and Chi-Chih Chang and Natalia Frumkin and Kai-Chiang Wu and Diana Marculescu},
booktitle={The Thirteenth International Conference on Learning Representations},
year={2025},
url={https://openreview.net/forum?id=mnna9LUg7P}
}

@inproceedings{xu2025mambaquant,
  title={Mambaquant: Quantizing the mamba family with variance aligned rotation methods},
  author={Yue, Yuxuan and Hu, Xing and Yang, Dawei and Yuan, Zhihang and Jiang, Zixu and Chen, Zhixuan and Yu, Jiangyong and Zhou, Sifan},
  booktitle={International Conference on Learning Representations},
  volume={2025},
  pages={33231--33250},
  year={2025}
}

@inproceedings{
quamba2,
title={Quamba2: A Robust and Scalable Post-training Quantization Framework for Selective State Space Models},
author={Hung-Yueh Chiang and Chi-Chih Chang and Natalia Frumkin and Kai-Chiang Wu and Mohamed S. Abdelfattah and Diana Marculescu},
booktitle={Forty-second International Conference on Machine Learning},
year={2025},
url={https://openreview.net/forum?id=Zm0Kper4yx}
}

@inproceedings{tianqi-etal-2025-q,
  title={Q-Mamba: Towards more efficient Mamba models via post-training quantization},
  author={Tianqi, Chen and Chen, Yuanteng and Wang, Peisong and Xu, Weixiang and Zhu, Zeyu and Cheng, Jian},
  booktitle={Findings of the Association for Computational Linguistics: ACL 2025},
  pages={10594--10610},
  year={2025}
}

@inproceedings{
yang2024gated,
title={Gated Linear Attention Transformers with Hardware-Efficient Training},
author={Songlin Yang and Bailin Wang and Yikang Shen and Rameswar Panda and Yoon Kim},
booktitle={Forty-first International Conference on Machine Learning},
year={2024},
url={https://openreview.net/forum?id=ia5XvxFUJT}
}

@article{sun2023retentive,
  title={Retentive network: A successor to transformer for large language models},
  author={Sun, Yutao and Dong, Li and Huang, Shaohan and Ma, Shuming and Xia, Yuqing and Xue, Jilong and Wang, Jianyong and Wei, Furu},
  journal={arXiv preprint arXiv:2307.08621},
  year={2023}
}

@inproceedings{schlag2021linear,
  title={Linear transformers are secretly fast weight programmers},
  author={Schlag, Imanol and Irie, Kazuki and Schmidhuber, J{\"u}rgen},
  booktitle={International conference on machine learning},
  pages={9355--9366},
  year={2021},
  organization={PMLR}
}

@inproceedings{katharopoulos2020transformers,
  title={Transformers are rnns: Fast autoregressive transformers with linear attention},
  author={Katharopoulos, Angelos and Vyas, Apoorv and Pappas, Nikolaos and Fleuret, Fran{\c{c}}ois},
  booktitle={International conference on machine learning},
  pages={5156--5165},
  year={2020},
  organization={PMLR}
}

@article{chandiramani2026nemotron,
  title={Nemotron 3 super: Open, efficient mixture-of-experts hybrid mamba-transformer model for agentic reasoning},
  author={Chandiramani, Aakshita and Blakeman, Aaron and Olaoye, Abdullahi and Gupta, Abhibha and Somasamudramath, Abhilash and Khattar, Abhinav and Adesoba, Adeola and Renduchintala, Adi and Asif, Adil and Agrawal, Aditya and others},
  journal={arXiv preprint arXiv:2604.12374},
  year={2026}
}

@article{nagel2021white,
  title={A white paper on neural network quantization},
  author={Nagel, Markus and Fournarakis, Marios and Amjad, Rana Ali and Bondarenko, Yelysei and Van Baalen, Mart and Blankevoort, Tijmen},
  journal={arXiv preprint arXiv:2106.08295},
  year={2021}
}

@article{merity2016pointer,
  title={Pointer sentinel mixture models},
  author={Merity, Stephen and Xiong, Caiming and Bradbury, James and Socher, Richard},
  journal={arXiv preprint arXiv:1609.07843},
  year={2016}
}

@article{vaswani2017attention,
  title={Attention is all you need},
  author={Vaswani, Ashish and Shazeer, Noam and Parmar, Niki and Uszkoreit, Jakob and Jones, Llion and Gomez, Aidan N and Kaiser, {\L}ukasz and Polosukhin, Illia},
  journal={Advances in neural information processing systems},
  volume={30},
  year={2017}
}

@article{micikevicius2017mixed,
  title={Mixed precision training},
  author={Micikevicius, Paulius and Narang, Sharan and Alben, Jonah and Diamos, Gregory and Elsen, Erich and Garcia, David and Ginsburg, Boris and Houston, Michael and Kuchaiev, Oleksii and Venkatesh, Ganesh and others},
  journal={arXiv preprint arXiv:1710.03740},
  year={2017}
}

@article{penedo2024fineweb,
  title={The fineweb datasets: Decanting the web for the finest text data at scale},
  author={Penedo, Guilherme and Kydl{\'\i}{\v{c}}ek, Hynek and Lozhkov, Anton and Mitchell, Margaret and Raffel, Colin and Von Werra, Leandro and Wolf, Thomas and others},
  journal={Advances in Neural Information Processing Systems},
  volume={37},
  pages={30811--30849},
  year={2024}
}

@article{loshchilov2017decoupled,
  title={Decoupled weight decay regularization},
  author={Loshchilov, Ilya and Hutter, Frank},
  journal={arXiv preprint arXiv:1711.05101},
  year={2017}
}

@article{arora2024just,
  title={Just read twice: closing the recall gap for recurrent language models},
  author={Arora, Simran and Timalsina, Aman and Singhal, Aaryan and Spector, Benjamin and Eyuboglu, Sabri and Zhao, Xinyi and Rao, Ashish and Rudra, Atri and R{\'e}, Christopher},
  journal={arXiv preprint arXiv:2407.05483},
  year={2024}
}

@inproceedings{
Rae2020Compressive,
title={Compressive Transformers for Long-Range Sequence Modelling},
author={Jack W. Rae and Anna Potapenko and Siddhant M. Jayakumar and Chloe Hillier and Timothy P. Lillicrap},
booktitle={International Conference on Learning Representations},
year={2020},
url={https://openreview.net/forum?id=SylKikSYDH}
}

@article{arora2023language,
  title={Language models enable simple systems for generating structured views of heterogeneous data lakes},
  author={Arora, Simran and Yang, Brandon and Eyuboglu, Sabri and Narayan, Avanika and Hojel, Andrew and Trummer, Immanuel and R{\'e}, Christopher},
  journal={arXiv preprint arXiv:2304.09433},
  year={2023}
}

@inproceedings{lockard2019openceres,
  title={Openceres: When open information extraction meets the semi-structured web},
  author={Lockard, Colin and Shiralkar, Prashant and Dong, Xin Luna},
  booktitle={Proceedings of the 2019 Conference of the North American Chapter of the Association for Computational Linguistics: Human Language Technologies, Volume 1 (Long and Short Papers)},
  pages={3047--3056},
  year={2019}
}

@article{kwiatkowski2019natural,
  title={Natural questions: a benchmark for question answering research},
  author={Kwiatkowski, Tom and Palomaki, Jennimaria and Redfield, Olivia and Collins, Michael and Parikh, Ankur and Alberti, Chris and Epstein, Danielle and Polosukhin, Illia and Devlin, Jacob and Lee, Kenton and others},
  journal={Transactions of the Association for Computational Linguistics},
  volume={7},
  pages={453--466},
  year={2019},
  publisher={MIT Press One Rogers Street, Cambridge, MA 02142-1209, USA journals-info~…}
}

@inproceedings{rajpurkar2018know,
  title={Know what you don’t know: Unanswerable questions for SQuAD},
  author={Rajpurkar, Pranav and Jia, Robin and Liang, Percy},
  booktitle={Proceedings of the 56th Annual Meeting of the Association for Computational Linguistics (Volume 2: Short Papers)},
  pages={784--789},
  year={2018}
}

@inproceedings{joshi2017triviaqa,
  title={Triviaqa: A large scale distantly supervised challenge dataset for reading comprehension},
  author={Joshi, Mandar and Choi, Eunsol and Weld, Daniel S and Zettlemoyer, Luke},
  booktitle={Proceedings of the 55th Annual Meeting of the Association for Computational Linguistics (Volume 1: Long Papers)},
  pages={1601--1611},
  year={2017}
}

@inproceedings{dua2019drop,
  title={DROP: A reading comprehension benchmark requiring discrete reasoning over paragraphs},
  author={Dua, Dheeru and Wang, Yizhong and Dasigi, Pradeep and Stanovsky, Gabriel and Singh, Sameer and Gardner, Matt},
  booktitle={Proceedings of the 2019 Conference of the North American Chapter of the Association for Computational Linguistics: Human Language Technologies, Volume 1 (Long and Short Papers)},
  pages={2368--2378},
  year={2019}
}

@article{gao2021framework,
  title={A framework for few-shot language model evaluation},
  author={Gao, Leo and Tow, Jonathan and Biderman, Stella and Black, Sid and DiPofi, Anthony and Foster, Charles and Golding, Laurence and Hsu, Jeffrey and McDonell, Kyle and Muennighoff, Niklas and others},
  journal={Zenodo},
  year={2021}
}

@inproceedings{paperno2016lambada,
  title={The LAMBADA dataset: Word prediction requiring a broad discourse context},
  author={Paperno, Denis and Kruszewski, Germ{\'a}n and Lazaridou, Angeliki and Pham, Ngoc-Quan and Bernardi, Raffaella and Pezzelle, Sandro and Baroni, Marco and Boleda, Gemma and Fern{\'a}ndez, Raquel},
  booktitle={Proceedings of the 54th annual meeting of the association for computational linguistics (volume 1: Long papers)},
  pages={1525--1534},
  year={2016}
}

@inproceedings{welbl2017crowdsourcing,
  title={Crowdsourcing multiple choice science questions},
  author={Welbl, Johannes and Liu, Nelson F and Gardner, Matt},
  booktitle={Proceedings of the 3rd Workshop on Noisy User-generated Text},
  pages={94--106},
  year={2017}
}

@inproceedings{roemmele2011choice,
  title={Choice of Plausible Alternatives: An Evaluation of Commonsense Causal Reasoning.},
  author={Roemmele, Melissa and Bejan, Cosmin Adrian and Gordon, Andrew S},
  booktitle={AAAI spring symposium: logical formalizations of commonsense reasoning},
  pages={90--95},
  year={2011}
}

@inproceedings{zellers2019hellaswag,
  title={Hellaswag: Can a machine really finish your sentence?},
  author={Zellers, Rowan and Holtzman, Ari and Bisk, Yonatan and Farhadi, Ali and Choi, Yejin},
  booktitle={Proceedings of the 57th annual meeting of the association for computational linguistics},
  pages={4791--4800},
  year={2019}
}

@article{halko2011finding,
  title={Finding structure with randomness: Probabilistic algorithms for constructing approximate matrix decompositions},
  author={Halko, Nathan and Martinsson, Per-Gunnar and Tropp, Joel A},
  journal={SIAM review},
  volume={53},
  number={2},
  pages={217--288},
  year={2011},
  publisher={SIAM}
}
